\theoremstyle{definition} \newtheorem{defn}{Definition}
\theoremstyle{plain} \newtheorem{prop}[defn]{Proposition}
\theoremstyle{plain} 
\theoremstyle{plain} \newtheorem{lem}[defn]{Lemma}
\theoremstyle{plain} 
\theoremstyle{remark} 
\theoremstyle{remark}
\newcommand{\term}[1]{\textcolor{BlueViolet}{\textit{{#1}}}}
\def\namedlabel#1#2{\begingroup
    #2%
    \def\@currentlabel{#2}%
    \phantomsection\label{#1}\endgroup
}
\newcommand*{\defeq}{\mathrel{\vcenter{\baselineskip0.5ex \lineskiplimit0pt     
                     \hbox{\scriptsize.}\hbox{\scriptsize.}}}=}
\newcommand{\overbar}[1]{\mkern 1.5mu\overline{\mkern-1.5mu#1\mkern-1.5mu}\mkern 1.5mu}
\DeclareMathOperator*{\argmin}{arg\,min}
\newcommand{\Abs}[1]{\lVert{#1}\rVert} % proper general norm notation.
\newcommand{\abs}[1]{\lvert{#1}\rvert} % proper absolute value notation.
\def\bigO{\mathcal{O}} % big O notation.
\DeclareMathOperator{\crit}{C} % learning criterion.
\def\ddist{\mu} % generic data distribution.
\DeclareMathOperator{\exx}{\mathbf{E}} % generic expectation operator.
\def\HH{\mathcal{H}} % typical notation for a set of candidates.
\DeclareMathOperator{\indic}{I} % indicator function.
\DeclareMathOperator{\loss}{\rdv{L}} % random loss.
\DeclareMathOperator{\msd}{MS} % mean-SD objective.
\DeclareMathOperator{\mv}{MV} % mean-variance objective.
\newcommand{\rdv}[1]{\mathsf{#1}} % random variable notation.
\def\RR{\mathbb{R}} % set of real numbers.
\DeclareMathOperator{\sign}{sign} % sign or "signum" operator.
\DeclareMathOperator{\vaa}{\mathbf{V}} % variance operator.
\begin{document}

\title{\textbf{Robust variance-regularized risk minimization\\with concomitant scaling}}
\author{
  Matthew J.~Holland\\
  Osaka University
}
\date{} % empty date.

\maketitle

\begin{abstract}
Under losses which are potentially heavy-tailed, we consider the task of minimizing sums of the loss mean and standard deviation,  without trying to accurately estimate the variance. By modifying a technique for variance-free robust mean estimation to fit our problem setting, we derive a simple learning procedure which can be easily combined with standard gradient-based solvers to be used in traditional machine learning workflows. Empirically, we verify that our proposed approach, despite its simplicity, performs as well or better than even the best-performing candidates derived from alternative criteria such as CVaR or DRO risks on a variety of datasets.
\end{abstract}

\tableofcontents

\section{Introduction}\label{sec:intro}

Traditionally, the ``textbook definition'' of a statistical machine learning problem is formulated in terms of making decisions which minimize the expected value of a random loss \citep{devroye1996ProbPR,mohri2012Foundations,vapnik1999NSLT}. More precisely, the traditional setup has us minimize $\exx_{\ddist}\loss(h)$ with respect to a decision $h$, where we denote random losses as $\loss(h) \defeq \ell(h;\rdv{Z})$, with a random data point $\rdv{Z} \sim \ddist$, and $\ell(\cdot)$ is a loss function assigning real values to $(\text{decision},\text{data})$ pairs. This problem class is very general in that it covers a wide range of learning problems both supervised and unsupervised, but it is limited in the sense that it only aspired to be optimal \emph{on average}, with no guarantees for other aspects of performance such as loss deviations, resilience to worst-case examples and distribution shift, sub-population disparity, and class-balanced error. While it is sometimes possible to account for these issues by modifying the base loss function $\ell$ (e.g., logit-adjusted softmax cross-entropy for balanced error \citep{menon2021a}), there is a growing literature looking at principled, systematic modifications to the ``risk,'' i.e., a non-random numerical property of the \emph{distribution} of $\loss(h)$ to be optimized in $h$, leaving the base loss $\ell(\cdot)$ fixed. Some prominent examples are weighted sums of loss quantiles \citep{maurer2021a}, distributionally robust optimization (DRO) risk \citep{hashimoto2018a}, conditional value-at-risk (CVaR) \citep{curi2020a}, tilted risk \citep{li2021a}, and more general optimized certainty equivalent (OCE) risks \citep{lee2020a}, among others. It is well-known that many risks can be expressed in terms of location-deviation sums, with the canonical example being a weighted sum of the loss mean and standard deviation (or variance) \citep[\S{2}]{rockafellar2013a}. We refer the reader to some recent surveys \citep{holland2023survey,hu2022a,royset2022arxiv} for more general background on developments in learning criteria.

In this work, the criterion of interest is the mean loss regularized by standard deviation (SD), when losses are allowed to be heavy-tailed. More formally, we allow for heavy tails in the sense that all we assume is that the second moment $\exx_{\ddist}\abs{\loss(h)}^{2}$ is finite, and the ultimate objective of interest is the \term{mean-SD} criterion
\begin{align}\label{eqn:mean_stdev_defn}
\msd_{\ddist}(h;\lambda) \defeq \exx_{\ddist}\loss(h) + \sqrt{\lambda\vaa_{\ddist}\loss(h)}
\end{align}
with loss variance denoted by $\vaa_{\ddist}\loss \defeq \exx_{\ddist}(\loss - \exx_{\ddist}\loss)^{2}$, and weighting parameter $\lambda \geq 0$. This mean-SD objective (\ref{eqn:mean_stdev_defn}) and its mean-variance counterpart have a long history in the literature on decision making under uncertainty, including the influential work of \citet{markowitz1952a} on optimal portfolio selection. In the context of machine learning, it is well-known that one can obtain ``fast rate'' bounds on the expected loss when variance is small (see \citep[\S{1}]{duchi2019a}), though the problem of actually ensuring that loss deviations are sufficiently small is an entirely separate matter. In this direction, \citet{maurer2009a} bound the (population) expected loss using a weighted sum of the \emph{sample} mean and standard deviation. Their ``sample variance penalized'' objective is convenient to compute and can be used to guarantee fast rates in theory, but a lack of convexity makes it hard to minimize in practice. A convex approximation is developed by \citet{duchi2019a}, who show that a sub-class of (empirical) DRO risks can be used to approximate the sample mean-SD objective, again yielding fast rates when the (population) variance is small enough. The critical limitation to this approach is poor guarantees under heavy-tailed losses; while we gain in terms of convexity, the empirical DRO risk of \citep{duchi2019a} is at least as sensitive to outliers as the naive empirical objective (i.e., directly minimizing the sample mean and SD), which is already known to result in highly sub-optimal performance guarantees under heavy tails \citep{brownlees2015a,devroye2016a,hsu2016a}. Recent work by \citet{zhai2021a} studies a natural strategy for robustifying the DRO objective (called DORO), which discards a specified fraction of the largest losses. While the impact of outliers can be reduced under the right setting of DORO, their approach is limited to non-negative losses, and the impact that such one-sided trimming has on the resulting mean-SD sum, our ultimate object of interest, is unknown.

With this context in mind, in this paper we propose a new approach to robustly minimize the objective (\ref{eqn:mean_stdev_defn}) under heavy-tailed losses, without \textit{a priori} knowledge of anything but the fact that variance is finite. Our key technique is based on extending a convex program of \citet{sun2021arxiv} from one-dimensional mean estimation to our mean-variance objective $\msd_{\ddist}(h;\lambda)$ under general losses. After some motivating background points in \S{\ref{sec:background_mean_estimation}}--\S{\ref{sec:background_sun}}, we describe our basic approach and summarize our contributions in \S{\ref{sec:background_bridge}}--\S{\ref{sec:background_ours}}. Theoretical analysis comes in \S{\ref{sec:theory}}, and based upon formal properties of the proposed objective function, we derive a general-purpose procedure summarized in Algorithm \ref{algo:ours}, and tested empirically in \S{\ref{sec:empirical}}. Our main finding is that the simple algorithm we derive works remarkably well on both simulated and real-world datasets without any fine-tuning, despite sacrificing the convexity enjoyed by procedures based on criteria such as CVaR and DRO. All detailed proofs are relegated to the appendix. Software and notebooks to reproduce all results in this paper are provided in an online repository.\footnote{\url{https://github.com/feedbackward/bdd-mv}}

\section{Background}\label{sec:background}

Before we describe our proposed approach to the mean-SD task described in \S{\ref{sec:intro}}, we start with a much simpler problem, namely the task of robust mean estimation. This will allow us to highlight key technical points from the literature which provide both conceptual and technical context for our proposal. Key points from the existing literature are introduced in \S{\ref{sec:background_mean_estimation}}--\S{\ref{sec:background_sun}}, and building upon this we introduce our method in \S{\ref{sec:background_bridge}}--\S{\ref{sec:background_ours}}.

\subsection{Robust mean estimation}\label{sec:background_mean_estimation}

Let $\rdv{X}$ be a random variable. For the moment, our goal will be to construct an accurate empirical estimate of the mean $\exx_{\ddist}\rdv{X}$, assuming only that the variance $\vaa_{\ddist}\rdv{X} = \exx_{\ddist}\rdv{X}^{2}-(\exx_{\ddist}\rdv{X})^{2}$ is both defined and finite. We assume access to an independent and identically distributed (IID) sample $\rdv{X}_{1},\ldots,\rdv{X}_{n}$. Since higher-order moments may be infinite, the tails of $\rdv{X}$ may be ``heavy'' and decidedly non-Gaussian, causing problems for the usual empirical mean. This problem setting is now very well-understood; see \citet{lugosi2019b} for an authoritative reference. One very well-known approach is to use M-estimators \citep{huber2009a}, namely to design an estimator $\rdv{A}_{n} \approx \exx_{\ddist}\rdv{X}$ satisfying
\begin{align}\label{eqn:motivation_m_estimator}
\rdv{A}_{n} \in \argmin_{a \in \RR} \frac{b}{n} \sum_{i=1}^{n}\rho\left(\frac{\rdv{X}_{i}-a}{b}\right)
\end{align}
where $\rho \colon \RR \to \RR_{+}$ is a function that is approximately quadratic near zero, but grows more slowly in the limit, i.e., large deviations are penalized in a sub-quadratic manner, where ``large'' is relative to the scaling parameter $b > 0$, used to control bias. When $\rho(\cdot)$ is convex, differentiable, and the solution set is non-empty, the condition (\ref{eqn:motivation_m_estimator}) is equivalent to
\begin{align}\label{eqn:motivation_m_estimator_stationary}
\frac{1}{n} \sum_{i=1}^{n}\rho^{\prime}\left(\frac{\rdv{X}_{i}-\rdv{A}_{n}}{b}\right) = 0
\end{align}
and when the derivative $\rho^{\prime}(\cdot)$ is bounded on $\RR$ such that 
\begin{align}\label{eqn:motivation_catoni_condition}
-\log(1 - x + \gamma x^{2}) \leq \rho^{\prime}(x) \leq \log(1 + x + \gamma x^{2})%, \quad x \in \RR
\end{align}
for some constant $0 < \gamma < \infty$, then the approach of \citet{catoni2012a} tells us that when $b^{2}$ scales with $\vaa_{\ddist}\rdv{X}/n$, the deviations $\abs{\rdv{A}_{n}-\exx_{\ddist}\rdv{X}}$ enjoy sub-Gaussian tails, namely upper bounds of the order $\bigO(\sqrt{\log(1/\delta)\vaa_{\ddist}\rdv{X}/n})$ with probability at least $1-\delta$. Under these weak assumptions, such guarantees are essentially optimal \citep{devroye2016a}. While an important result, in practice the need for knowledge of $\vaa_{\ddist}\rdv{X}$ is a significant limitation, since without finite higher-order moments, it is not plausible to obtain variance estimates with such sub-Gaussian guarantees. There do exist other robust estimators such as median-of-means \citep[\S{2.1}]{lugosi2019b} which do not require variance information, and this illustrates the fact that knowledge of the variance is sufficient, although \emph{not} necessary, for sub-Gaussian mean estimation under heavy tails.

\begin{figure}[t]
\centering
\includegraphics[width=1.0\textwidth]{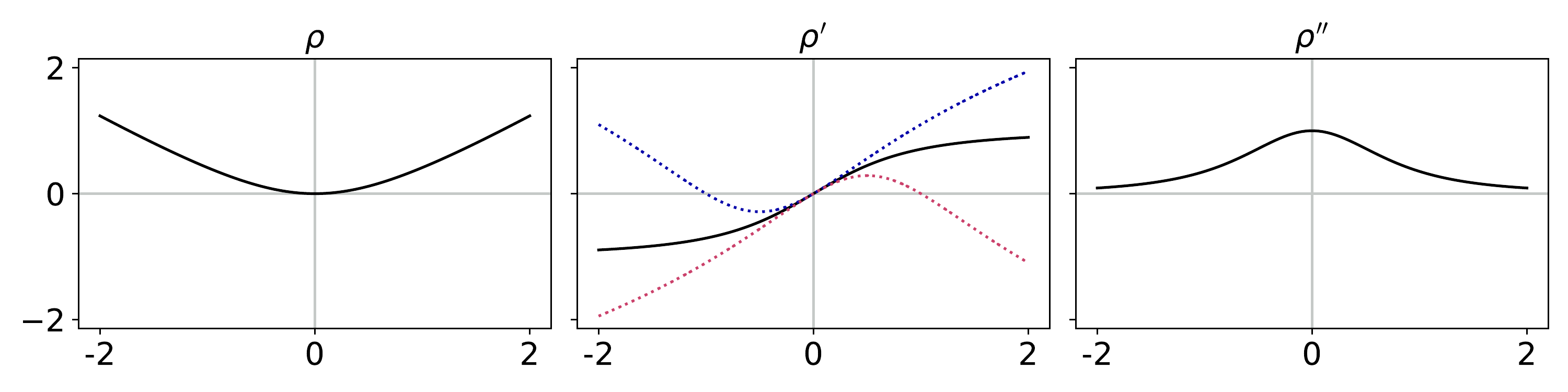}
\caption{From left to right, we plot the graphs of $\rho(\cdot)$, $\rho^{\prime}(\cdot)$, and $\rho^{\prime\prime}(\cdot)$ with $\rho$ as in (\ref{eqn:sun_huber_fn}). In the middle plot, the dotted curves represent the upper (blue) and lower (dark pink) bounds in (\ref{eqn:motivation_catoni_condition}) with $\gamma=1$.}
\label{fig:huber_disp}
\end{figure}

\subsection{Good-enough ancillary scaling}\label{sec:background_sun}

Since sub-Gaussian estimates of the variance $\vaa_{\ddist}\rdv{X}$ are not possible under our weak assumptions, it is natural to ask whether there exists a middle-ground, namely whether or not it is possible to construct a (data-driven) procedure for setting the scale $b > 0$ in (\ref{eqn:motivation_m_estimator}) which is ``good enough'' in the sense that the resulting $\rdv{A}_{n}$ is sub-Gaussian, even though the scale itself cannot be. An initial (affirmative) answer to this question was given in recent work by \citet{sun2021arxiv}, whose basic idea we briefly review here, with some slight re-formulation for readability and additional generality.

Essentially, the underlying idea in \citep{sun2021arxiv} is to utilize the convexity of $\rho$ in (\ref{eqn:motivation_m_estimator}), and to solve for both $a \in \RR$ and $b > 0$ simultaneously, while penalizing $b$ in such a way as to encourage scaling which is ``good enough'' as mentioned. More precisely, the empirical objective
\begin{align}\label{eqn:motivation_sun_obj}
\widehat{\rdv{S}}_{n}(a,b) \defeq \beta b + \frac{b}{n} \sum_{i=1}^{n}\rho\left(\frac{\rdv{X}_{i}-a}{b}\right)
\end{align}
plays a central role, where $0 < \beta < 1$ is a parameter we can control, and $\rho$ is fixed as
\begin{align}\label{eqn:sun_huber_fn}
\rho(x) = \sqrt{x^{2}+1} - 1, \quad x \in \RR
\end{align}
which is differentiable, and satisfies the Catoni condition (\ref{eqn:motivation_catoni_condition}) with $\gamma = 1$ (see Figure \ref{fig:huber_disp}). If we fix $b > 0$, then the solution sets (in $a$) of both $\widehat{\rdv{S}}_{n}(a,b)$ and $b \times \widehat{\rdv{S}}_{n}(a,b)$ are identical, and it should be noted that the re-scaled map $x \mapsto b^{2}\rho(x/b) = b\sqrt{x^{2}+b^{2}}-b^{2}$ closely approximates $x \mapsto x^{2}/2$ as $b$ grows large (Figure \ref{fig:pseudohuber}), and is well-known as the ``pseudo Huber'' or ``smooth Huber'' function, where $b$ acts as a smoothing parameter.\footnote{\citet[\S{1}]{barron2019a} gives a summary of this and related functions from the perspective of loss function design. This is not the only smoothed variant of the classic Huber function \citep{huber1964a}, see for example \citet[\S{6.4.4}]{rey1983Robust}.}

\begin{figure}[t]
\centering
\includegraphics[width=0.4\textwidth]{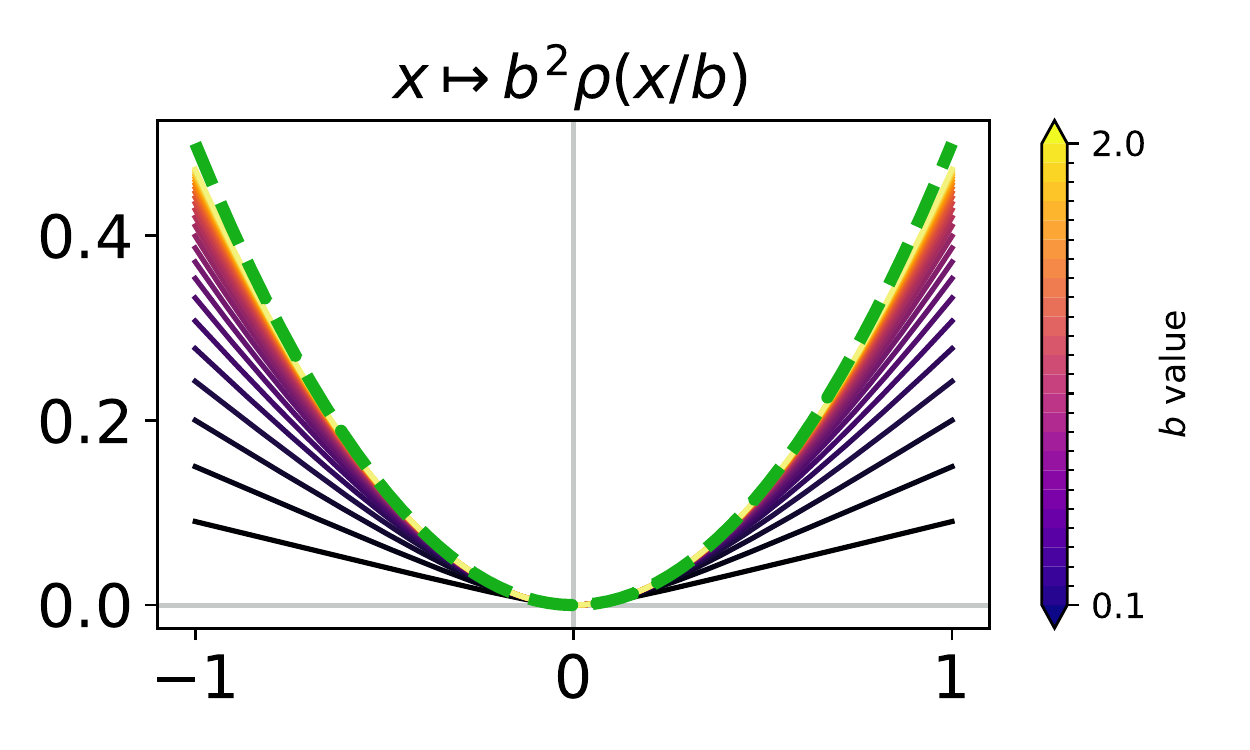}
\caption{Graphs of the smooth Huber function, with $\rho$ as in (\ref{eqn:sun_huber_fn}), over a range of smoothing parameters. For visual comparison, the graph of $x \mapsto x^{2}/2$ is plotted with a thick dashed green curve.}
\label{fig:pseudohuber}
\end{figure}

When considering the joint objective $\widehat{\rdv{S}}_{n}(a,b)$, from the computational side, one important fact is that this function is convex on $\RR \times (0,\infty)$ (see \S{\ref{sec:convexity_smoothness}}). From the statistical side of things, the solutions
\begin{align}
(\rdv{A}_{n},\rdv{B}_{n}) \in \argmin_{a \in \RR, b > 0} \widehat{\rdv{S}}_{n}(a,b)
\end{align}
are such that under certain regularity conditions, the deviations $\abs{\rdv{A}_{n}-\exx_{\ddist}\rdv{X}}$ are nearly optimal (sub-Gaussian, up to poly-logarithmic factors) \citep[\S{3.3}]{sun2021arxiv}.\footnote{Strictly speaking, the objective used in \citep{sun2021arxiv} is $\widehat{\rdv{S}}_{n}(a,b)/\beta$, but all key results easily translate to our setup.} The corresponding $\rdv{B}_{n}$ of course cannot give us sub-Gaussian estimates of the variance under such weak assumptions, but it does scale in a desirable way \citep[\S{3.2}]{sun2021arxiv}, and when bias is mitigated by setting $\beta$ sufficiently small given the sample size $n$, the resulting $\rdv{B}_{n}$ is good enough to provide such guarantees for $\rdv{A}_{n}$, which is the ultimate goal anyways. By taking on a slightly more difficult optimization problem, it is possible to get away with not having prior knowledge or sub-Gaussian estimates of the variance. We use this basic insight as a stepping stone to our approach for learning algorithms charged with selecting a decision $h$ such that the loss $\loss(h)$ has a small mean-variance.

\subsection{A bridge between two problems}\label{sec:background_bridge}

To develop our proposal, we now return to the more general learning setup, where the test data is a random vector $\rdv{Z} \sim \ddist$, test loss is $\loss(h) \defeq \ell(h;\rdv{Z})$, and we have $n$ IID training points $\rdv{Z}_{1},\ldots,\rdv{Z}_{n}$ yielding losses $\loss_{i}(\cdot) \defeq \ell(\cdot;\rdv{Z}_{i})$, $i \in [n]$. If our goal was to simply minimize the traditional risk $\exx_{\ddist}\loss(h)$ over $h \in \HH$ under heavy-tailed losses, then in principle we could extend the approach of \S{\ref{sec:background_sun}} to robustly estimate the test risk using
\begin{align}\label{eqn:background_sun_brownlees_prep}
(\rdv{A}_{n}(h),\rdv{B}_{n}(h)) \in \argmin_{a \in \RR, b > 0} \left[ \beta b + \frac{b}{n}\sum_{i=1}^{n} \rho\left(\frac{\loss_{i}(h)-a}{b}\right) \right]
\end{align}
and design a learning algorithm using (\ref{eqn:background_sun_brownlees_prep}) as follows:
\begin{align}\label{eqn:background_sun_brownlees}
\rdv{H}_{n} \in \argmin_{h \in \HH} \rdv{A}_{n}(h).
\end{align}
Under some regularity conditions, the machinery of \citet{brownlees2015a} could then be combined with pointwise concentration inequalities in \citep{sun2021arxiv} to control the tails of $\exx_{\ddist}\loss(\rdv{H}_{n})$ under just finite loss variance. Our goal however is not to minimize the expected loss, but rather the mean-SD sum (\ref{eqn:mean_stdev_defn}). Furthermore, the bi-level program inherent in (\ref{eqn:background_sun_brownlees}) is not computationally congenial from the perspective of large-scale machine learning tasks. To ease the computational burden while at the same time building a bridge between these two problems, we consider a new objective function taking the form
\begin{align}\label{eqn:crit_empirical}
\widehat{\rdv{C}}_{n}(h;a,b) & \defeq \alpha a + \beta b + \frac{\lambda b}{n} \sum_{i=1}^{n} \rho\left(\frac{\loss_{i}(h) - a}{b}\right)
\end{align}
with parameters $\alpha \geq 0$ and $\beta \geq 0$. We call (\ref{eqn:crit_empirical}) the \term{modified Sun-Huber objective}, since $\rho$ is fixed as (\ref{eqn:sun_huber_fn}), and this form plays a special role in our analysis. Compared with that of (\ref{eqn:background_sun_brownlees}), this objective is a simple function of $h$, and gradient-based minimizers can be easily applied assuming the underlying loss $\ell(\cdot)$ is sufficiently smooth. On the other hand, it is ``biased'' in the sense that it penalizes not just the loss location (whenever $\alpha > 0$), but the loss scale as well (whenever $\beta > 0$). Intuitively, some kind of deviation-driven ``bias'' is precisely what we need from the standpoint of minimizing the mean-SD objective $\msd_{\ddist}(h;\lambda)$, but it is not immediately clear how this objective relates to $\widehat{\rdv{C}}_{n}(h;a,b)$, and it is equally unclear if we can just plug this new objective into standard machine learning workflows (e.g., using stochastic gradient-based optimizers) and achieve the desired effect without a prohibitive amount of tuning.

\subsection{Overview of contributions and limitations}\label{sec:background_ours}

With our basic idea described and some key questions raised, we summarize the central points that characterize the rest of this paper, and also highlight the limitations of this work. Broadly speaking, the new proposal here is a class of empirical ``risk'' minimizers, namely any learning algorithm which minimizes the new empirical objective (\ref{eqn:crit_empirical}). More explicitly, this refers to all procedures which returns a triplet satisfying
\begin{align}\label{eqn:our_learner}
(\rdv{H}_{n},\rdv{A}_{n},\rdv{B}_{n}) \in \argmin_{h \in \HH, a \in \RR, b > 0} \widehat{\rdv{C}}_{n}(h;a,b)
\end{align}
where $\HH$ denotes a set of feasible decisions, and we note that each element of this class is characterized by the settings of $\alpha$, $\beta$, and $\lambda$ used to define $\widehat{\rdv{C}}_{n}$. In analogy with the strategy employed in \S{\ref{sec:background_sun}}, we do not expect $\rdv{A}_{n}$ and $\rdv{B}_{n}$ to provide sub-Gaussian estimates; we simply hope that these estimates are good enough to ensure the mean-SD is smaller and/or better-behaved when compared to standard benchmarks such as mean-based empirical risk minimization (ERM) and DRO-based algorithms. Theoretically, we are interested in identifying links between the proposed objective $\widehat{\rdv{C}}_{n}$ and loss properties such as $\exx_{\ddist}\loss(h)$ and $\vaa_{\ddist}\loss(h)$, with particular emphasis on how the settings of $\alpha$, $\beta$, and $\lambda$ influence such links.

Our main theory-driven contribution is the derivation of a principled approach to determine $\widehat{\rdv{C}}_{n}$ (i.e., set $\alpha$ and $\beta$), before seeing any training data, in such a way that we can balance between ``biased but robust'' $\rho$-based deviations and ``unbiased but outlier-sensitive'' squared deviations that arise in the loss variance. Details are in \S{\ref{sec:theory_link_msd}}--\S{\ref{sec:theory_finite}}, and a concise procedure is summarized in Algorithm \ref{algo:ours}. We do not, however, consider the behavior of $\msd_{\ddist}(\rdv{H}_{n};\lambda)$ for a particular implementation of (\ref{eqn:our_learner}) (e.g., SGD) from a theoretical viewpoint; the implementation is left abstract. This is where the empirical analysis of \S{\ref{sec:empirical}} comes in. We provide evidence using simulated and real data that our procedure can be quite useful, even using a rudimentary implementation where we wrap base loss objects and naively pass them to standard stochastic gradient-based learning routines, with no manual tweaking of parameters.

\section{Theory}\label{sec:theory}

\subsection{Links to the mean-SD objective}\label{sec:theory_link_msd}

We would like to make the connection between the proposed objective (\ref{eqn:crit_empirical}) and the ultimate objective (\ref{eqn:mean_stdev_defn}) a bit more transparent. To do this, we will make use of the population version of $\widehat{\rdv{C}}_{n}$, denoted henceforth by $\crit_{\ddist}$ and defined as
\begin{align}\label{eqn:crit_population}
\crit_{\ddist}(h;a,b) \defeq \alpha{a} + \beta{b} + \lambda{b}\exx_{\ddist}\rho\left(\frac{\loss(h)-a}{b}\right).
\end{align}
Let us fix the decision $h$ and threshold $a$, paying close attention to the optimal value of the scale $b$, denoted here by $b_{\ddist}(h,a)$. More explicitly, consider any positive real number satisfying
\begin{align}\label{eqn:optimal_scale}
b_{\ddist}(h,a) \in \argmin_{b > 0} \crit_{\ddist}(h;a,b).
\end{align}
While it is not explicit in our notation, the optimal scale in (\ref{eqn:optimal_scale}) depends critically on the value of $\beta$. Intuitively, a smaller value of $\beta$ leads to a weaker penalty for taking $b$ large, thus encouraging a larger value of $b_{\ddist}(h,a)$. In fact, one can show that viewing $b_{\ddist}(h,a)$ as a function of the parameter $\beta$, in the limit we have
\begin{align}\label{eqn:beta_makes_scale_grow}
\lim\limits_{\beta \to 0_{+}} b_{\ddist}(h,a) = \infty.
\end{align}
See \S{\ref{sec:appendix_proofs}} for a proof of (\ref{eqn:beta_makes_scale_grow}). Combining this with the fact (also proved in \S{\ref{sec:appendix_proofs}}) that
\begin{align}\label{eqn:robust_deviations_vanish}
\lim\limits_{b \to \infty} b \exx_{\ddist}\rho\left(\frac{\loss(h)-a}{b}\right) = 0
\end{align}
also holds, by re-scaling to avoid trivial limits we can obtain a result which sharply bounds the proposed learning criterion at the optimal scale using the square root of \emph{quadratic} deviations, thereby establishing a clear link to the desired mean-SD objective (\ref{eqn:mean_stdev_defn}).
\begin{prop}\label{prop:crit_optimized_scale}
Let $\HH$ be such that $\exx_{\ddist}\abs{\loss(h)}^{2} < \infty$ for each $h \in \HH$. If we set $\alpha = \alpha(\beta)$ such that $\alpha(\beta)/\sqrt{\beta} \to \widetilde{\alpha} \in [0,\infty)$ as $\beta \to 0_{+}$, then in this limit, with appropriate re-scaling the scale-optimized learning criteria can be bounded above and below as
\begin{align*}
\widetilde{\alpha}a & + (1/2)\sqrt{\lambda\exx_{\ddist}(\loss(h)-a)^{2}}\\
& \leq \lim_{\beta \to 0_{+}} \min_{b > 0}\frac{\crit_{\ddist}(h;a,b)}{\sqrt{\beta}}\\
& \leq \widetilde{\alpha}a + 4\sqrt{\lambda\exx_{\ddist}(\loss(h)-a)^{2}}
\end{align*}
for any choice of threshold $a \in \RR$ and weight $\alpha \geq 0$.
\end{prop}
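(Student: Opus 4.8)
The plan is to reduce to a one-dimensional problem in the scale $b$, use the ``smooth Huber'' identity $b\,\rho\big((\loss(h)-a)/b\big)=\sqrt{(\loss(h)-a)^{2}+b^{2}}-b$ recorded in \S{\ref{sec:background_sun}}, and then sandwich the inner minimum by means of the elementary two-sided bound $\frac{x^{2}}{\abs{x}+2}\le\rho(x)=\sqrt{x^{2}+1}-1\le\frac{x^{2}}{2}$. Write $\sigma^{2}\defeq\exx_{\ddist}(\loss(h)-a)^{2}<\infty$. With the identity,
\begin{align*}
\crit_{\ddist}(h;a,b)=\alpha a+g_{\beta}(b),\qquad g_{\beta}(b)\defeq\beta b+\lambda\exx_{\ddist}\!\left[\sqrt{(\loss(h)-a)^{2}+b^{2}}-b\right],
\end{align*}
so $\min_{b>0}\crit_{\ddist}(h;a,b)/\sqrt{\beta}=(\alpha/\sqrt{\beta})a+\min_{b>0}g_{\beta}(b)/\sqrt{\beta}$. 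The first term tends to $\widetilde{\alpha}a$ by hypothesis, so it remains only to control $\min_{b>0}g_{\beta}(b)/\sqrt{\beta}$; I will in fact show it converges, as $\beta\to0_{+}$, to $\sqrt{2\lambda\sigma^{2}}$, which lies in $[\tfrac12\sqrt{\lambda\sigma^{2}},\,4\sqrt{\lambda\sigma^{2}}]$ since $\tfrac12\le\sqrt2\le4$.

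For the upper bound, $\rho(x)\le x^{2}/2$ gives $g_{\beta}(b)\le\beta b+\lambda\sigma^{2}/(2b)$ for every $b>0$; the right-hand side is minimized at $b=\sqrt{\lambda\sigma^{2}/(2\beta)}$, with value $\sqrt{2\beta\lambda\sigma^{2}}$. Hence $\min_{b>0}g_{\beta}(b)/\sqrt{\beta}\le\sqrt{2\lambda\sigma^{2}}$ for every $\beta>0$, and $\limsup_{\beta\to0_{+}}$ of this quantity is at most $\sqrt{2\lambda\sigma^{2}}\le4\sqrt{\lambda\sigma^{2}}$.

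The lower bound is the crux. From $\rho(x)\ge x^{2}/(\abs{x}+2)$ one gets $b\,\rho\big((\loss(h)-a)/b\big)\ge(\loss(h)-a)^{2}/(\abs{\loss(h)-a}+2b)$, but one cannot bound the resulting expectation below by a constant times $\sigma^{2}/b$: under heavy tails a sub-quadratic $\rho$ can undervalue a heavy tail even while $\sigma^{2}$ is large. The remedy is to truncate first and send the truncation level to infinity only after $\beta\to0_{+}$, which is legitimate precisely because the optimal scale diverges (cf.\ (\ref{eqn:beta_makes_scale_grow})), so for each fixed truncation level the quadratic regime eventually governs the minimizer. Concretely, for $K>0$ put $\sigma_{K}^{2}\defeq\exx_{\ddist}\big[(\loss(h)-a)^{2}\,\indic\{\abs{\loss(h)-a}\le K\}\big]$; then $g_{\beta}(b)\ge\beta b+\lambda\sigma_{K}^{2}/(K+2b)$ for all $b>0$, and minimizing the right-hand side over $b>0$ (its minimizer $b=\tfrac12(\sqrt{2\lambda\sigma_{K}^{2}/\beta}-K)$ is positive once $\beta<2\lambda\sigma_{K}^{2}/K^{2}$, hence for all small $\beta$) gives $\min_{b>0}g_{\beta}(b)\ge\sqrt{2\beta\lambda\sigma_{K}^{2}}-\beta K/2$. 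Dividing by $\sqrt{\beta}$ and letting $\beta\to0_{+}$ gives $\liminf_{\beta\to0_{+}}\min_{b>0}g_{\beta}(b)/\sqrt{\beta}\ge\sqrt{2\lambda\sigma_{K}^{2}}$; then letting $K\to\infty$ and invoking monotone convergence ($\sigma_{K}^{2}\uparrow\sigma^{2}$) yields $\liminf\ge\sqrt{2\lambda\sigma^{2}}\ge\tfrac12\sqrt{\lambda\sigma^{2}}$. Together with the previous paragraph the limit exists and equals $\widetilde{\alpha}a+\sqrt{2\lambda\sigma^{2}}$; since $\sigma^{2}=\exx_{\ddist}(\loss(h)-a)^{2}$ and the common term $\widetilde{\alpha}a$ (of either sign) does not disturb the ordering, this sits between the claimed lower and upper expressions for every $a\in\RR$ and $\alpha\ge0$.

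I expect the truncation-and-order-of-limits step in the last paragraph to be the only genuine obstacle; everything else is the algebraic ``smooth Huber'' identity, the two elementary bounds on $\rho$, and one-variable calculus. A minor technical point to verify along the way is that $g_{\beta}$ is convex on $(0,\infty)$ and coercive for each $\beta>0$ (when $\sigma^{2}>0$), so the inner ``$\min$'' is attained; the degenerate case $\sigma^{2}=0$ is trivial, both bounds reducing to zero.
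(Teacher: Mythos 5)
Your proof is correct, and it takes a genuinely different route from the paper's. The paper's argument leans on Proposition \ref{prop:motivator_scale}: it sandwiches the optimal scale $b_{\ddist}(h,a)$ between $\sqrt{c_{\text{lo}}(\beta)/\beta}$ and $\sqrt{c_{\text{hi}}/\beta}$, exploits monotonicity of $b \mapsto b\exx_{\ddist}\rho((\loss(h)-a)/b)$ to convert those scale bounds into bounds on the minimum value, and then passes to the limit via dominated convergence and the expansion $\sqrt{c x + x^{2}} - x \to c/2$. You instead never touch the optimizer: you sandwich $\rho$ itself between $x^{2}/(\abs{x}+2)$ and $x^{2}/2$, minimize the resulting one-variable upper and lower envelopes in closed form, and handle the heavy-tailed lower bound by truncating at level $K$ and interchanging limits in the safe direction (a supremum of lower bounds, finished with monotone convergence). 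Your route is self-contained -- it does not need Proposition \ref{prop:motivator_scale}, the limit (\ref{eqn:beta_makes_scale_grow}), or dominated convergence -- and it actually proves a strictly sharper statement: the limit exists and equals $\widetilde{\alpha}a + \sqrt{2\lambda\exx_{\ddist}(\loss(h)-a)^{2}}$ exactly, whereas the paper only establishes the interval $[(1/2+1/\sqrt{2})\sqrt{\lambda\exx_{\ddist}(\loss(h)-a)^{2}},\,(2\sqrt{2}+1/\sqrt{2})\sqrt{\lambda\exx_{\ddist}(\loss(h)-a)^{2}}]$ (loosened to $1/2$ and $4$ in the statement), inside which your constant $\sqrt{2}$ sits. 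What the paper's approach buys in exchange is reuse of machinery it needs elsewhere and an explicit picture of how the optimal scale behaves; what yours buys is elementarity and the exact constant. The only caveats are cosmetic: when $\exx_{\ddist}(\loss(h)-a)^{2}=0$ the inner infimum is approached but not attained (an issue equally present in the paper's formulation), and your closing remark that the minimizer of the truncated envelope is positive for small $\beta$ is not actually needed, since the unconstrained minimum over $t>0$ already gives a valid lower bound.
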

\noindent
In the special case where $a=\exx_{\ddist}\loss(h)$ and $\widetilde{\alpha} > 0$, we naturally recover mean-SD sums akin to those studied in an ERM framework by \citet{maurer2009a} and those bounded from above using convex surrogates by \citet{duchi2019a}.

Of course in practice, we will only ever be working with fixed values of $\beta$, and the entire point of introducing new criteria (namely $\widehat{\rdv{C}}_{n}$ and $\crit_{\ddist}$) was to give us some control over how sensitive our objective is to loss tails. The following result makes the nature of this control (through $\beta$) more transparent.
\begin{prop}\label{prop:motivator_scale}
Let $\HH$ and $\loss(h)$ be as stated in Proposition \ref{prop:crit_optimized_scale}. Letting $b_{\ddist}(h,a)$ be as specified in (\ref{eqn:optimal_scale}), we define a Bernoulli random variable
\begin{align*}
\rdv{I}(h;a) \defeq \indic\left\{\abs{\loss(h)-a} \leq b_{\ddist}(h,a)\right\}
\end{align*}
for any choice of $h \in \HH$ and $a \in \RR$. The optimal scale can then be bounded by
\begin{align*}
\frac{\lambda}{4\beta}\exx_{\ddist}\rdv{I}(h;a)(\loss(h)-a)^{2} \leq b_{\ddist}^{2}(h,a) \leq \frac{\lambda}{2\beta}\exx_{\ddist}(\loss(h)-a)^{2}
\end{align*}
for any choice of $0 < \beta < \lambda$ and $a \in \RR$.
\end{prop}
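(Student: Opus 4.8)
The plan is to collapse the problem to a single first-order optimality condition in the scale variable and then read off both inequalities by elementary algebra. First I would rewrite the population objective in a cleaner form: for the Sun--Huber choice (\ref{eqn:sun_huber_fn}) one has $b\rho(x/b) = \sqrt{x^{2}+b^{2}}-b$, so, abbreviating $\rdv{U} \defeq \loss(h) - a$,
\begin{align*}
\crit_{\ddist}(h;a,b) = \alpha a + (\beta-\lambda)b + \lambda\exx_{\ddist}\sqrt{\rdv{U}^{2}+b^{2}},
\end{align*}
which is finite for all $b$ because $\exx_{\ddist}\abs{\loss(h)}^{2} < \infty$, and is convex in $b$ on $(0,\infty)$ since $b \mapsto \sqrt{c^{2}+b^{2}}$ is convex for every constant $c$ and expectation preserves convexity. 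Hence the scale $b_{\ddist}(h,a)$ furnished by (\ref{eqn:optimal_scale}), being an interior minimizer, is characterized by the vanishing of the $b$-derivative. Differentiating under the expectation (legitimate since $\abs{\partial_{b}\sqrt{c^{2}+b^{2}}} = \abs{b}/\sqrt{c^{2}+b^{2}} \leq 1$) and writing $b = b_{\ddist}(h,a)$ for brevity, the stationarity condition $(\beta-\lambda) + \lambda\exx_{\ddist}\bigl[b/\sqrt{\rdv{U}^{2}+b^{2}}\bigr] = 0$ rearranges, after rationalizing the numerator, into the key identity
\begin{align*}
\frac{\beta}{\lambda} = \exx_{\ddist}\left[1 - \frac{b}{\sqrt{\rdv{U}^{2}+b^{2}}}\right] = \exx_{\ddist}\left[\frac{\rdv{U}^{2}}{\sqrt{\rdv{U}^{2}+b^{2}}\bigl(\sqrt{\rdv{U}^{2}+b^{2}}+b\bigr)}\right].
\end{align*}
The middle expression always lies in $[0,1)$, which is exactly why the constraint $0 < \beta < \lambda$ is the natural regime delimiting the existence of such an interior optimal scale.

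With this identity in hand, both bounds follow by estimating the nonnegative integrand on the right. For the upper bound, use $\sqrt{\rdv{U}^{2}+b^{2}} \geq b$ and $\sqrt{\rdv{U}^{2}+b^{2}}+b \geq 2b$ to bound the denominator below by $2b^{2}$, so the integrand is at most $\rdv{U}^{2}/(2b^{2})$; taking expectations gives $\beta/\lambda \leq \exx_{\ddist}\rdv{U}^{2}/(2b^{2})$, i.e.\ $b_{\ddist}^{2}(h,a) \leq (\lambda/2\beta)\exx_{\ddist}(\loss(h)-a)^{2}$. For the lower bound, discard the contribution of $\{\abs{\rdv{U}} > b\}$ (the integrand is nonnegative there) and, on the complementary event $\{\abs{\rdv{U}} \leq b\} = \{\rdv{I}(h;a) = 1\}$, note $\sqrt{\rdv{U}^{2}+b^{2}} \leq \sqrt{2}\,b$, so the denominator is at most $\sqrt{2}\,b(\sqrt{2}\,b+b) = (2+\sqrt{2})b^{2} \leq 4b^{2}$ and the integrand is at least $\rdv{U}^{2}/(4b^{2})$ there. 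Hence $\beta/\lambda \geq \exx_{\ddist}[\rdv{I}(h;a)\rdv{U}^{2}]/(4b^{2})$, which rearranges to $(\lambda/4\beta)\exx_{\ddist}\rdv{I}(h;a)(\loss(h)-a)^{2} \leq b_{\ddist}^{2}(h,a)$.

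The main obstacle I anticipate is not the two inequalities themselves --- those are just the elementary two-sided estimates on $\sqrt{\rdv{U}^{2}+b^{2}}(\sqrt{\rdv{U}^{2}+b^{2}}+b)$ above, transported verbatim from the real line since $h$ and $a$ are held fixed --- but rather nailing down the stationarity identity cleanly: justifying differentiation under the expectation, confirming (strict) convexity so the minimizer is well behaved, and verifying that (\ref{eqn:optimal_scale}) is attained in the interior $(0,\infty)$ rather than as a degenerate limit $b \to 0^{+}$. This last point is precisely where the two-sided constraint $0 < \beta < \lambda$ is used: the $b$-derivative tends to $\beta > 0$ as $b \to \infty$, while for small $b$ it is negative under the stated constraint, so a zero of the derivative lies in the open interval and the first-order identity genuinely holds.
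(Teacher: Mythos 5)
Your proof is correct, and the two inequalities are obtained by a genuinely different (and arguably cleaner) route than the paper's. Both arguments hinge on the same stationarity identity $\exx_{\ddist}\bigl[b/\sqrt{(\loss(h)-a)^{2}+b^{2}}\bigr] = 1-\beta/\lambda$ at $b = b_{\ddist}(h,a)$, but from there the paper proceeds by two separate applications of Jensen's inequality: convexity of $x \mapsto 1/\sqrt{1+x}$ for the upper bound (followed by algebra on $(1-\eta)^{2}/(1-(1-\eta)^{2}) \leq 1/(2\eta)$), and concavity of $\sqrt{\cdot}$ plus the helper inequality $1/(1+x) \leq 1-x/2$ on the truncation event for the lower bound (followed by $(1-\eta)^{2} \geq 1-2\eta$). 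You instead rationalize once to rewrite $\beta/\lambda$ as the expectation of the manifestly nonnegative integrand $\rdv{U}^{2}/\bigl(\sqrt{\rdv{U}^{2}+b^{2}}\,(\sqrt{\rdv{U}^{2}+b^{2}}+b)\bigr)$ and then bound the denominator pointwise between $2b^{2}$ and $(2+\sqrt{2})b^{2}$ (on the relevant event), which recovers both stated constants without any appeal to Jensen or to the auxiliary lemmas. This buys a shorter, more self-contained argument and even a marginally sharper lower-bound constant $(2+\sqrt{2}$ in place of $4)$ before relaxation; what it gives up relative to the paper is only stylistic, since the paper's route mirrors the scale-control machinery of Sun that it explicitly adapts. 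One small caveat: your closing claim that the $b$-derivative is negative for small $b$ under $0 < \beta < \lambda$ alone is not quite right if $\loss(h)$ has an atom at $a$ (one then needs $\beta < \lambda\prr(\loss(h) \neq a)$ for an interior zero), but since (\ref{eqn:optimal_scale}) presupposes a minimizer in the open set $(0,\infty)$ and the paper itself only invokes necessity of the first-order condition, this does not affect the validity of your proof relative to the paper's.
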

\noindent
While it is difficult to pin down exactly how $b_{\ddist}(h,a)$ changes as a function of $\beta$, Proposition \ref{prop:motivator_scale} clearly shows us the appealing property that optimal scale induced by the proposed objective function essentially falls between the (tail-sensitive) quadratic deviations and a (tail-insensitive) truncated variant, with the truncation threshold loosening as $\beta$ shrinks.

\subsection{Guiding the optimal threshold}\label{sec:theory_location}

Since the preceding Propositions \ref{prop:crit_optimized_scale}--\ref{prop:motivator_scale} both hold for any choice of threshold $a \in \RR$, they clearly hold when both $a$ and $b$ are optimal, i.e., when $a$ and $b$ are set as
\begin{align}\label{eqn:optimal_pair}
(a_{\ddist}(h),b_{\ddist}(h)) \in \argmin_{a \in \RR, b>0} \crit_{\ddist}(h;a,b).
\end{align}
In particular, using first-order conditions, the inclusion (\ref{eqn:optimal_pair}) is equivalent to the next two equalities holding:
\begin{align}
\nonumber
\exx_{\ddist}\left(\frac{\loss(h)-a_{\ddist}(h)}{\sqrt{(\loss(h)-a_{\ddist}(h))^{2}+b_{\ddist}^{2}(h)}}\right) & = \alpha/\lambda,\\
\label{eqn:pair_optimality}
 \exx_{\ddist}\left(\frac{b_{\ddist}(h)}{\sqrt{(\loss(h)-a_{\ddist}(h))^{2}+b_{\ddist}^{2}(h)}}\right) & = 1 - (\beta/\lambda).
\end{align}
Given the context of our analysis in \S{\ref{sec:theory_link_msd}}, let us consider the effect of taking $\beta$ towards zero. For any non-trivial random loss, the second equality asks that $b_{\ddist}(h)$ grow without bound as $\beta \to 0_{+}$, while $\abs{a_{\ddist}(h)}$ must be either bounded or grow slower than $b_{\ddist}(h)$. On the other hand, if $\alpha$ is too large (i.e., $\alpha > \lambda$) then the first equality will be impossible to satisfy. In addition to taking $0 < \alpha < \lambda$, note that if we multiply both sides of the first equality in (\ref{eqn:pair_optimality}) by $b_{\ddist}(h)$ and apply Proposition \ref{prop:motivator_scale}, then under this optimality condition we must have
\begin{align}\label{eqn:location_condition_upbd}
\exx_{\ddist}\left(\frac{\loss(h)-a_{\ddist}(h)}{\sqrt{(\frac{\loss(h)-a_{\ddist}(h)}{b_{\ddist}(h)})^{2}+1}}\right) \leq \alpha \sqrt{\frac{\lambda}{2\beta}\exx_{\ddist}(\loss(h)-a_{\ddist}(h))^{2}}.
\end{align}
With this inequality in place, we adopt the following strategy: encourage the optimal location to converge as $a_{\ddist}(h) \to \exx_{\ddist}\loss(h)$ when $\beta \to 0_{+}$. Since $\lambda > 0$ is assumed to be fixed in advance, the only way to ensure this using (\ref{eqn:location_condition_upbd}) is to set $\alpha = \alpha(\beta)$ such that
\begin{align}\label{eqn:alpha_condition}
\lim\limits_{\beta \to 0_{+}} \frac{\alpha(\beta)}{\sqrt{\beta}} = 0.
\end{align}
While (\ref{eqn:alpha_condition}) gives us a rather clear condition for determining $\alpha$ given $\beta$, we still do not have a principled setting for $\beta$. This point will be treated next.

\subsection{Deriving an algorithm for finite samples}\label{sec:theory_finite}

To complement the preceding analysis and discussion centered around the population objective (\ref{eqn:crit_population}), we now return to the empirical objective function $\widehat{\rdv{C}}_{n}(h;a,b)$ introduced in (\ref{eqn:crit_empirical}). We maintain the running assumption that the training data $\rdv{Z}_{1},\ldots,\rdv{Z}_{n}$ are an IID sample from $\ddist$, and thus the losses $\loss_{i}(h)$, $i=1,\ldots,n$ are independent given any fixed $h$. With $h$ and $b>0$ fixed for the moment, we will now take a closer look at the optimal (empirical) threshold that arises from this objective function, namely any random variable $\rdv{A}_{n}(h,b)$ satisfying
\begin{align}\label{eqn:optimal_empirical_location}
\rdv{A}_{n}(h,b) \in \argmin_{a \in \RR} \widehat{\rdv{C}}_{n}(h;a,b).
\end{align}
Using the property (\ref{eqn:motivation_catoni_condition}) of the smooth Huber-like function $\rho$, we can demonstrate how data-driven thresholds satisfying (\ref{eqn:optimal_empirical_location}) are concentrated at a point near the expected loss, where $\alpha$ and $b$ play a key role in how close this point is to the mean.
\begin{prop}[Concentration at a shifted location]\label{prop:location_concentration}
Taking $0 \leq \alpha < 1$, $b > 0$, and $0 < \delta < 1$, with large enough $n$ it is always possible to satisfy the condition
\begin{align*}
\frac{4\alpha}{\lambda} \leq 4\left( \frac{\vaa_{\ddist}\loss(h)}{b^{2}} + \frac{\log(2/\delta)}{n} \right) \leq 1 - \frac{4\alpha}{\lambda},
\end{align*}
and when this condition is satisfied, the data-driven threshold $\rdv{A}_{n}(h,b)$ in (\ref{eqn:optimal_empirical_location}) satisfies
\begin{align*}
& \left\lvert \rdv{A}_{n}(h,b) - \left[\exx_{\ddist}\loss(h)- \frac{2\alpha}{\lambda}b\right] \right\rvert\\
& \leq 2\left( \frac{\vaa_{\ddist}\loss(h)}{b} + \frac{b\log(2/\delta)}{n} \right)
\end{align*}
with probability no less than $1-\delta$.
\end{prop}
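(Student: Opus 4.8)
The plan is to work with the first-order stationarity condition for the empirical threshold and translate it into a concentration statement via the Catoni-style bounds on $\rho'$. Fixing $h$ and $b > 0$, the minimizer $\rdv{A}_{n}(h,b)$ of $a \mapsto \widehat{\rdv{C}}_{n}(h;a,b) = \alpha a + \beta b + (\lambda b/n)\sum_{i}\rho((\loss_{i}(h)-a)/b)$ is characterized (using convexity in $a$ and differentiability of $\rho$) by $\alpha - (\lambda/n)\sum_{i}\rho'((\loss_{i}(h)-\rdv{A}_{n}(h,b))/b) = 0$, i.e.
\begin{align*}
\frac{1}{n}\sum_{i=1}^{n}\rho'\!\left(\frac{\loss_{i}(h)-\rdv{A}_{n}(h,b)}{b}\right) = \frac{\alpha}{\lambda}.
\end{align*}
First I would reduce everything to this equation: the target location is whatever value $a^{\star}$ makes the population analogue $\exx_{\ddist}\rho'((\loss(h)-a^{\star})/b)$ close to $\alpha/\lambda$, and the claimed center $\exx_{\ddist}\loss(h) - (2\alpha/\lambda)b$ should be (approximately) that $a^{\star}$.

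Next I would apply the Catoni condition (\ref{eqn:motivation_catoni_condition}) with $\gamma = 1$, which for $\rho$ as in (\ref{eqn:sun_huber_fn}) gives $-\log(1 - x + x^{2}) \leq \rho'(x) \leq \log(1 + x + x^{2})$. Plugging $x = (\loss_{i}(h)-a)/b$ and taking the empirical mean, then using $\log(1+u) \leq u$ and the matching lower bound $\log(1 - x + x^{2}) \leq$ (something linear-plus-quadratic after bounding), one sandwiches $(1/n)\sum_{i}\rho'(\cdot)$ between two affine-in-$(a)$ expressions involving the empirical mean $\frac{1}{n}\sum_{i}(\loss_{i}(h)-a)/b$ and the empirical second moment $\frac{1}{n}\sum_{i}((\loss_{i}(h)-a)/b)^{2}$. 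Then I would invoke concentration for these two empirical quantities: the empirical mean of $\loss_{i}(h)$ concentrates around $\exx_{\ddist}\loss(h)$ and the empirical second moment around $\exx_{\ddist}(\loss(h)-a)^{2}$, with the $\log(2/\delta)/n$ term entering through a Bernstein- or Bennett-type inequality (this is exactly where the $\vaa_{\ddist}\loss(h)/n + \log(2/\delta)/n$ shape comes from, since the relevant variance proxy is controlled by $\vaa_{\ddist}\loss(h)$). The stated side condition $\frac{4\alpha}{\lambda} \leq 4(\vaa_{\ddist}\loss(h)/b^{2} + \log(2/\delta)/n) \leq 1 - \frac{4\alpha}{\lambda}$ is what guarantees that the resulting sandwich of affine functions actually has a root in the relevant interval and that the root is not pushed outside the region where the quadratic bound on $\rho'$ is useful — so I would derive this condition as the feasibility requirement for the sandwiching argument rather than assume it.

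With the two-sided affine bounds and the concentration estimates in hand, the final step is to solve the two bounding affine equations in $a$ explicitly. Each bound, set equal to $\alpha/\lambda$, is a linear equation whose solution is $\exx_{\ddist}\loss(h) - (2\alpha/\lambda)b$ plus an error term of order $\vaa_{\ddist}\loss(h)/b + b\log(2/\delta)/n$; by monotonicity of $a \mapsto (1/n)\sum_{i}\rho'((\loss_{i}(h)-a)/b)$ (decreasing, since $\rho'$ is increasing and $a$ enters with a minus sign), $\rdv{A}_{n}(h,b)$ lies between these two solutions, which yields the claimed interval. I expect the main obstacle to be the bookkeeping in the middle step: the Catoni inequality is not symmetric, so the upper and lower affine envelopes have genuinely different coefficients, and one must be careful that the quadratic remainder terms are correctly identified as (empirical) second moments about $a$ rather than about the mean, then re-expressed via $\exx_{\ddist}(\loss(h)-a)^{2} = \vaa_{\ddist}\loss(h) + (a - \exx_{\ddist}\loss(h))^{2}$ and shown to stay of the right order given that $a$ is within $O(b)$ of the mean. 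Getting the constants ($2\alpha/\lambda$ in the center, the factor $2$ and $4$ in the bounds) to come out cleanly will require tracking these asymmetric pieces carefully, but there is no conceptual difficulty beyond that.
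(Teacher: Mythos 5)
Your starting point is right: the minimizer is characterized by the stationarity condition $\frac{1}{n}\sum_{i}\rho'\bigl((\loss_{i}(h)-\rdv{A}_{n}(h,b))/b\bigr) = \alpha/\lambda$, and the side condition in the statement is indeed a feasibility (discriminant) requirement. But the middle step of your plan has a genuine gap. You propose to sandwich the empirical average of $\rho'$ by expressions in the \emph{empirical} first and second moments of $\loss_{i}(h)-a$, and then to invoke Bernstein- or Bennett-type concentration for those empirical quantities. Under the paper's standing assumption --- only $\exx_{\ddist}\abs{\loss(h)}^{2}<\infty$ --- neither quantity enjoys $\log(1/\delta)/n$-type concentration: the empirical mean of a heavy-tailed variable is not sub-Gaussian (this is the entire motivation of \S\ref{sec:background_mean_estimation}), and the empirical second moment is the sample mean of a variable with possibly infinite variance, for which Bernstein/Bennett do not apply at all. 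So the route "empirical sandwich $+$ concentration of empirical moments" cannot deliver the stated $2(\vaa_{\ddist}\loss(h)/b + b\log(2/\delta)/n)$ bound.

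The point of the Catoni device, which the paper's proof uses, is precisely to avoid ever controlling an empirical second moment. One bounds the exponential moment
\begin{align*}
\exx\exp\left(\sum_{i=1}^{n}\rho'\left(\frac{\loss_{i}(h)-a}{b}\right)\right) \leq \prod_{i=1}^{n}\left(1+\frac{\exx_{\ddist}\loss(h)-a}{b}+\frac{\gamma}{b^{2}}\exx_{\ddist}(\loss(h)-a)^{2}\right),
\end{align*}
so that only \emph{population} moments appear, and then applies Markov's inequality to get a $1-\delta$ event on which $\frac{b}{n}\sum_{i}\rho'(\cdot)$ is dominated by a deterministic quadratic $p(a)$ in $a$; since the left-hand side is decreasing in $a$ and vanishes at $a=\rdv{A}_{n}$, the smallest root of $p$ upper-bounds $\rdv{A}_{n}$, and a symmetric argument with the lower Catoni bound gives the matching lower bound. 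Relatedly, your description of the bounding equations as "affine" is not right --- they are quadratics in $a$ (which is why the discriminant condition appears and why the quadratic formula plus $1-\sqrt{1-D}\leq D$ produces the final constants). I would recommend reworking the argument along the exponential-moment lines rather than trying to patch the empirical-concentration step.
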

\noindent
This result can be seen as an extension of \citep[Prop.~3.1]{sun2021arxiv} for the function (\ref{eqn:motivation_sun_obj}) used in mean estimation to our generalized learning problem, although we use a different proof strategy which does not require strong convexity of $\widehat{\rdv{C}}_{n}$ (with respect to $a$).

With Proposition \ref{prop:location_concentration} established, conventional wisdom might incline one to pursue a $\bigO(1/\sqrt{n})$ rate in the upper bound; in this case, setting $\beta \propto 1/n$ is a natural strategy since Proposition \ref{prop:motivator_scale} tells us that for the population objective, the optimal setting of $b$ scales with $\sqrt{\lambda/\beta}$. While this is natural from the perspective of tight concentration bounds for $\rdv{A}_{n}(h,b)$, we argue that a different strategy is more appropriate when we actually consider how $(\rdv{H}_{n},\rdv{A}_{n},\rdv{B}_{n})$ will behave in the full joint optimization (\ref{eqn:our_learner}). The most obvious reason for this is that the joint objective lacks convexity and smoothness, as the following result summarizes.
\begin{prop}[Joint objective is non-convex and non-smooth]\label{prop:nonsmooth}
Even when $\HH$ is a compact convex set and the base loss function $\ell(\cdot;\rdv{Z})$ is convex, the mapping $(h,a,b) \mapsto \widehat{\rdv{C}}_{n}$ is not convex in general, and is non-smooth in the sense that its gradient is not Lipschitz continuous on $\HH \times \RR \times (0,\infty)$.
\end{prop}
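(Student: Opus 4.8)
The plan is to put $\widehat{\rdv{C}}_n$ into a transparent form and then produce a single elementary instance --- one-dimensional $\HH$, a single data point, quadratic base loss --- that refutes both claims. First I would use the closed form of $\rho$ from (\ref{eqn:sun_huber_fn}): since $b>0$, $b\,\rho((\loss_i(h)-a)/b)=\sqrt{(\loss_i(h)-a)^{2}+b^{2}}-b$, so that
\begin{align*}
\widehat{\rdv{C}}_n(h;a,b)=\alpha a+(\beta-\lambda)b+\frac{\lambda}{n}\sum_{i=1}^{n}\sqrt{(\loss_i(h)-a)^{2}+b^{2}}.
\end{align*}
The part $\alpha a+(\beta-\lambda)b$ is affine and harmless, so everything reduces to the summands $\sqrt{(\loss_i(h)-a)^{2}+b^{2}}$. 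Two facts drive the proof. First, although $(u,v)\mapsto\sqrt{u^{2}+v^{2}}$ is convex, its composition with $(h,a,b)\mapsto(\loss_i(h)-a,\,b)$ need not be, because that map is only convex --- not affine --- in $h$, while $t\mapsto\sqrt{t^{2}+b^{2}}$ is non-monotone; this is exactly the gap with the mean-estimation objective of \S\ref{sec:background_sun}, where $\loss_i$ is a constant and the inner map is affine, which is why $\widehat{\rdv{S}}_n$ is convex (cf.\ \S\ref{sec:convexity_smoothness}). Second, as $b\to0_{+}$ we have $\sqrt{t^{2}+b^{2}}\to\abs{t}$, so the smoothing degenerates and second derivatives blow up near $b=0$.

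To refute convexity I would take $\HH=[-2,2]\subset\RR$ (compact and convex), $n=1$, and the convex base loss $\ell(h;z)=h^{2}$, so that $\loss_1(h)=h^{2}$. The points $(1,1,1)$ and $(-1,1,1)$ lie in $\HH\times\RR\times(0,\infty)$ and their midpoint is $(0,1,1)$, yet writing $c\defeq\alpha+\beta-\lambda$ one computes $\widehat{\rdv{C}}_1(0;1,1)=c+\lambda\sqrt{2}$ while $\tfrac12\widehat{\rdv{C}}_1(1;1,1)+\tfrac12\widehat{\rdv{C}}_1(-1;1,1)=c+\lambda$; since $\lambda>0$ and $\sqrt{2}>1$, this contradicts convexity of $(h,a,b)\mapsto\widehat{\rdv{C}}_1$ on $\HH\times\RR\times(0,\infty)$. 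Repeating the single data point handles general $n$, and embedding $[-2,2]$ as a line segment in $\RR^{d}$ handles general $\HH$.

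For non-smoothness I would reuse the same instance --- here $\widehat{\rdv{C}}_1$ is differentiable throughout $\HH\times\RR\times(0,\infty)$, the radicand being at least $b^{2}>0$ --- and compare the gradient at $(0,0,b)$ and $(0,\delta,b)$ for small $\delta>0$. A short computation from the display above gives
\begin{align*}
\frac{\Abs{\nabla\widehat{\rdv{C}}_1(0;\delta,b)-\nabla\widehat{\rdv{C}}_1(0;0,b)}}{\delta}\;\longrightarrow\;\frac{\lambda}{b}\qquad\text{as }\delta\to0_{+},
\end{align*}
and the right-hand side is unbounded as $b\to0_{+}$. A finite Lipschitz constant for $\nabla\widehat{\rdv{C}}_1$ on $\HH\times\RR\times(0,\infty)$ would bound the left-hand side uniformly over all such $\delta$ and $b$, so none exists, i.e.\ the gradient fails to be Lipschitz continuous there. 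The only real care needed anywhere is bookkeeping rather than computation: verifying the instance is admissible under the hypotheses (compact convex $\HH$, convex base loss), and noting that a single violated convexity inequality already rules out convexity while a single unbounded difference quotient of the full gradient already rules out a Lipschitz gradient; everything else is one-line arithmetic.
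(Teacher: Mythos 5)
Your argument is correct, and it takes a genuinely different (and more explicit) route than the paper's. The paper disposes of Proposition \ref{prop:nonsmooth} in one sentence: non-convexity is attributed to the general principle that composing a convex $\rho$ that is non-monotone with a convex (non-affine) inner map need not preserve convexity, and non-smoothness is inherited \textit{a fortiori} from Lemma \ref{lem:partial_objective}, where the Hessian of $(x,b)\mapsto \beta b + b\rho(x/b)$ is shown to have operator norm blowing up like $1/b$ along the ray $x=b$. You instead exhibit a concrete admissible instance ($\HH=[-2,2]$, $n=1$, $\ell(h;z)=h^{2}$) for which the midpoint inequality fails --- I checked the arithmetic: the endpoint values are both $c+\lambda$ and the midpoint value is $c+\lambda\sqrt{2}$ with $c=\alpha+\beta-\lambda$, so convexity is violated whenever $\lambda>0$ --- and you establish the failure of gradient Lipschitz continuity by a direct difference quotient in the $a$-coordinate, whose limit $\lambda/b$ is unbounded as $b\to0_{+}$ since the domain of $b$ is the open half-line. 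Your non-smoothness argument is essentially the same phenomenon the paper isolates in Lemma \ref{lem:partial_objective} (curvature of order $1/b$ near $b=0$), just reached via a first-order quotient rather than the Hessian norm; your non-convexity argument is arguably \emph{more} complete than the paper's, since "need not be convex" is a heuristic that strictly speaking still requires a witness, which you supply and the paper leaves implicit. The only cost of your approach is that it proves non-smoothness for a particular instance rather than for the generic partial objective, but that is all the proposition claims ("in general"), so nothing is lost.
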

\noindent
In consideration of Proposition \ref{prop:nonsmooth}, standard complexity results for typical optimizers such as stochastic gradient descent to achieve a $\varepsilon$-stationary point are on the order of $\bigO(\varepsilon^{-4})$; see \citet{davis2019b} for example.\footnote{Even if the objective were smooth, the same rates are typical; see for example \citet{ghadimi2013a}.} With this in mind, setting $\beta \propto 1/n$ to achieve $\bigO(\varepsilon^{-2})$ sample complexity for error bounds of $\rdv{A}_{n}(h,b)$ seems superfluous if in the end the dominant complexity for solving the ultimate problem (\ref{eqn:our_learner}) will be of the order $\bigO(\varepsilon^{-4})$. As such, in order to match this rate, the more natural strategy is to set $\beta \propto 1/\sqrt{n}$, or more precisely to set
\begin{align}\label{eqn:beta_condition}
\beta = \frac{\beta_{0}}{\sqrt{n}}
\end{align}
where $\beta_{0} > 0$ is a constant used to ensure $0 < \beta < \lambda$. This, coupled with $\alpha(\beta) = \beta$ to satisfy (\ref{eqn:alpha_condition}) from the previous sub-section, is our proposed setting to determine $(\alpha,\beta)$ (and thus $\widehat{\rdv{C}}_{n}$) using just knowledge of $n$, and without having observed any data points. This procedure is summarized in Algorithm \ref{algo:ours}, and will be studied empirically in \S{\ref{sec:empirical}}.

\begin{algorithm}[t]
\caption{Modified Sun-Huber}
\label{algo:ours}
\begin{algorithmic}
\STATE{{\bfseries Inputs:} data $\rdv{Z}_{1},\ldots,\rdv{Z}_{n}$ and parameter $\lambda > 0$.}
\smallskip
\STATE{{\bfseries Set:} $\displaystyle \beta = \beta_{0} / \sqrt{n}$, with $\beta_{0}$ such that $0 < \beta < \lambda$.}\hfill\COMMENT{Based on (\ref{eqn:beta_condition}).}
\smallskip
\STATE{{\bfseries Set:} $\displaystyle \alpha = \beta$.}\hfill\COMMENT{Satisfies (\ref{eqn:alpha_condition}).}
\smallskip
\STATE{{\bfseries Minimize:} $\displaystyle \widehat{\rdv{C}}_{n}(h;a,b)$ in $(h,a,b)$ using $\alpha$ and $\beta$ as above.}
\end{algorithmic}
\end{algorithm}

\subsection{Stationary points of mean-variance}

Having established links between the proposed objective and the mean-SD objective, we next consider the mean-variance objective
\begin{align}\label{eqn:mean_variance_defn}
\mv_{\ddist}(h) \defeq \exx_{\ddist}\loss(h) + \vaa_{\ddist}\loss(h).
\end{align}
This quantity can be expressed as the minimum value of a convex function, namely we have
\begin{align}\label{eqn:mv_convex_opt_value}
\mv_{\ddist}(h) = \min_{a \in \RR} \left[ a + \frac{\exx_{\ddist}(\loss(h)-a)^{2}+1}{2} \right] = a_{\mv}(h) + \frac{\exx_{\ddist}(\loss(h)-a_{\mv}(h))^{2}+1}{2}
\end{align}
where on the right-most side we have set $a_{\mv}(h) \defeq \exx_{\ddist}\loss(h)-1$. Assuming the underlying loss is differentiable, the gradient with respect to $h$ can be written as
\begin{align*}
\mv_{\ddist}^{\prime}(h) & = \exx_{\ddist}\loss^{\prime}(h) + \exx_{\ddist}\loss(h)\loss^{\prime}(h) - \exx_{\ddist}\loss(h)\exx_{\ddist}\loss^{\prime}(h)\\
& = \exx_{\ddist}\loss^{\prime}(h) + \exx_{\ddist}\left(\loss(h) - \exx_{\ddist}\loss(h)\right)\loss^{\prime}(h)\\
& = \exx_{\ddist}\left(\loss(h) - \left(\exx_{\ddist}\loss(h)-1\right)\right)\loss^{\prime}(h)
\end{align*}
which implies a stationarity condition of
\begin{align}
\mv_{\ddist}^{\prime}(h) = 0 \iff \exx_{\ddist}\left(\loss(h) - \left(\exx_{\ddist}\loss(h)-1\right)\right)\loss^{\prime}(h) = 0.
\end{align}
Similarly, the partial derivative of the learning criterion (\ref{eqn:crit_population}) taken with respect to $h$ is
\begin{align*}
\frac{\partial}{\partial{h}}\crit_{\ddist}(h;a,b) = \exx_{\ddist}\left(\frac{\loss(h)-a}{\sqrt{(\loss(h)-a)^{2}+b^{2}}}\right)\loss^{\prime}(h)
\end{align*}
and thus multiplying both sides by $b > 0$, we obtain a simple stationarity condition of
\begin{align}
\frac{\partial}{\partial{h}}\crit_{\ddist}(h;a,b) = 0 \iff \exx_{\ddist}\left(\frac{\loss(h)-a}{\sqrt{(\frac{\loss(h)-a}{b})^{2}+1}}\right)\loss^{\prime}(h) = 0.
\end{align}
With the right threshold setting, obviously the two conditions become very similar as $b$ grows large. The following result makes this precise.
\begin{prop}
Let loss function $\ell$ and data distribution $\ddist$ be such that the random vector $\loss(h)\loss^{\prime}(h)$ is integrable and has a norm with finite mean, i.e., $\exx_{\ddist}\Abs{\loss(h)\loss^{\prime}(h)} < \infty$ for some choice of $h \in \HH$. Then, for any $a \in \RR$, defining
\begin{align}
f(h;a) \defeq \lim\limits_{b \to \infty} b \frac{\partial}{\partial{h}}\crit_{\ddist}(h;a,b)
\end{align}
the stationary points of the mean-variance objective are related to those of the proposed objective (\ref{eqn:crit_population}) through the following equivalence:
\begin{align*}
f(h;a_{\mv}(h)) = 0 \iff \frac{\partial}{\partial{h}}\mv_{\ddist}(h) = 0
\end{align*}
where $\mv_{\ddist}(h)$ is as defined in (\ref{eqn:mean_variance_defn}).
\end{prop}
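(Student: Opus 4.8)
The plan is to evaluate $f(h;a)$ in closed form for every $a\in\RR$, then substitute $a=a_{\mv}(h)$ and match against the expression for $\mv_{\ddist}^{\prime}(h)$ derived just above the statement. Starting from the $h$-partial of $\crit_{\ddist}$ displayed before the Proposition and multiplying through by $b$,
\[
b\,\frac{\partial}{\partial{h}}\crit_{\ddist}(h;a,b) = \exx_{\ddist}\!\left(\frac{\loss(h)-a}{\sqrt{\big(\tfrac{\loss(h)-a}{b}\big)^{2}+1}}\right)\loss^{\prime}(h),
\]
so the whole question reduces to whether the limit $b\to\infty$ may be moved inside this expectation. Fix the underlying randomness: since $\big(\tfrac{\loss(h)-a}{b}\big)^{2}\to 0$, the integrand converges pointwise to $(\loss(h)-a)\loss^{\prime}(h)$. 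For $b\geq 1$ the denominator is at least $1$, so the integrand is bounded in norm by $\Abs{(\loss(h)-a)\loss^{\prime}(h)} \leq \Abs{\loss(h)\loss^{\prime}(h)} + \abs{a}\,\Abs{\loss^{\prime}(h)}$, which is integrable under the stated hypothesis $\exx_{\ddist}\Abs{\loss(h)\loss^{\prime}(h)}<\infty$ together with $\exx_{\ddist}\Abs{\loss^{\prime}(h)}<\infty$ (the latter being part of the differentiability/integrability conditions already invoked to write the $h$-partial of $\crit_{\ddist}$ and $\mv_{\ddist}^{\prime}$ as expectations). Dominated convergence then yields, for every $a\in\RR$,
\[
f(h;a) = \exx_{\ddist}(\loss(h)-a)\loss^{\prime}(h),
\]
and in particular the defining limit exists.

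It remains to specialize and compare. Taking $a = a_{\mv}(h) = \exx_{\ddist}\loss(h)-1$ gives
\[
f(h;a_{\mv}(h)) = \exx_{\ddist}\big(\loss(h) - (\exx_{\ddist}\loss(h)-1)\big)\loss^{\prime}(h),
\]
which is exactly the right-hand side of the mean-variance stationarity identity established just above, i.e.\ $f(h;a_{\mv}(h)) = \mv_{\ddist}^{\prime}(h)$ (a positive multiplicative constant in the $h$-partial of $\crit_{\ddist}$, if present, is immaterial here). Since the two quantities coincide, $f(h;a_{\mv}(h)) = 0$ if and only if $\mv_{\ddist}^{\prime}(h) = 0$, which is the claimed equivalence.

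The only genuine step is the limit–expectation interchange; everything else is substitution and bookkeeping. The mild subtlety there is exhibiting an integrable dominating function for fixed real $a$, which is precisely what $\exx_{\ddist}\Abs{\loss(h)\loss^{\prime}(h)}<\infty$ (plus integrability of $\loss^{\prime}(h)$) supplies — one should either state $\exx_{\ddist}\Abs{\loss^{\prime}(h)}<\infty$ explicitly or observe that it is already implied by the assumptions used to differentiate $\mv_{\ddist}$ under the integral. With that bound in hand the conclusion is immediate.
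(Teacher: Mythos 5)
Your argument is correct and is exactly the route the paper intends: the paper derives the two gradient expressions in the text immediately before the proposition and then omits a written proof, so the only substantive step is the dominated-convergence interchange you carry out, with the domination $\bigl\lVert\text{integrand}\bigr\rVert \leq \Abs{\loss(h)\loss^{\prime}(h)} + \abs{a}\,\Abs{\loss^{\prime}(h)}$ and the substitution $a = a_{\mv}(h)$. Your observation that integrability of $\loss^{\prime}(h)$ is needed beyond the stated hypothesis (though already implicit in writing $\mv_{\ddist}^{\prime}$ as an expectation), and that the dropped factor of $\lambda>0$ is immaterial for the equivalence, are both accurate and, if anything, slightly more careful than the paper's presentation.
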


\subsection{Comparison with dual form of DRO risk}

Some readers may notice that the proposed (population) objective (\ref{eqn:crit_population}) looks quite similar to the dual form of DRO risks:
\begin{align}
\text{DRO}_{\ddist}(h;\beta) \defeq \inf_{a \in \RR, b > 0} \left[ a + \beta b + b \exx_{\ddist}\phi^{\ast}\left(\frac{\loss(h)-a}{b}\right)  \right]
\end{align}
where $\phi^{\ast}$ is the Legendre-Fenchel convex conjugate $\phi^{\ast}(x) \defeq \sup_{u \in \RR} [xu - \phi(u)]$ induced by a function $\phi:\RR \to \overbar{\RR}$, assumed to be convex and lower semi-continuous, with $\phi(1)=0$ and $\phi(x) = \infty$ whenever $x < 0$ (cf.~\citep[\S{3.2}]{shapiro2017a}). Given this similarity, one might ask whether or not some form of DRO risk can be reverse engineered from our proposed objective. Taking up this point briefly, we first note that the conjugate of $\rho$ given by (\ref{eqn:sun_huber_fn}) is
\begin{align*}
\rho^{\ast}(x) \defeq \sup_{u \in \RR} \left[xu - \rho(u)\right] = \sup_{u \in \RR} \left[xu - \sqrt{u^{2}+1} + 1\right].
\end{align*}
From the non-negative nature of $\rho$, clearly $\rho^{\ast}(0) = -\rho(0) = 0$. For $x \neq 0$, note that taking the derivative of concave function $u \mapsto xu - \rho(u)$ and setting it to zero, we obtain the first-order optimality conditions
\begin{align*}
\frac{u}{\sqrt{u^{2}+1}} = x \iff \frac{\sign(x)}{\sqrt{1+1/u^{2}}} = x \iff \frac{1}{x^{2}} = 1+1/u^{2} \iff u = \frac{\sign(x)}{\sqrt{1/x^{2} - 1}}.
\end{align*}
Plugging this solution in whenever $\abs{x} < 1$ and doing a bit of algebra readily yields the simple closed-form expression
\begin{align}\label{eqn:rho_legendre}
\rho^{\ast}(x) = 
\begin{cases}
\frac{x^{2}}{\sqrt{1-x^{2}}} + 1 - \frac{1}{\sqrt{1-x^{2}}}, & \text{ if } 0 \leq \abs{x} < 1\\
\infty, & \text{ else}.
\end{cases}
\end{align}
As can be readily observed from both (\ref{eqn:rho_legendre}) and Figure \ref{fig:rho_legendre}, this function does not satisfy any of the requirements placed on $\phi$ except convexity, and thus despite the similar form, the non-monotonic nature of $\rho$ is in sharp contrast with monotonicity of typical cases of $\phi^{\ast}$ that arise in the DRO literature (e.g.~\citep[\S{3}]{bental2013a}), and does not readily imply a ``primal'' DRO objective that can be recovered using $\rho^{\ast}$.

\begin{figure}[t]
\centering
\includegraphics[width=0.4\textwidth]{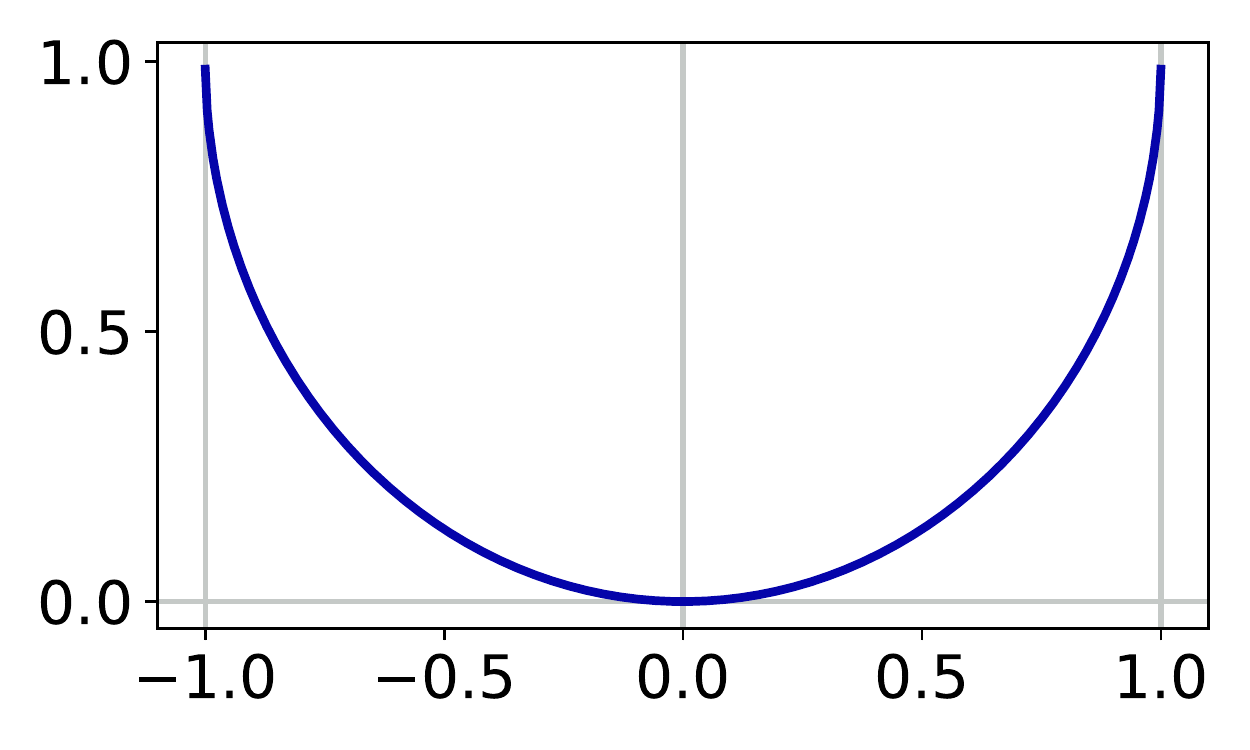}
\caption{Graph of the Legendre transform $\rho^{\ast}$ as given in (\ref{eqn:rho_legendre}) over $(-1,1)$.}
\label{fig:rho_legendre}
\end{figure}

\section{Empirical analysis}\label{sec:empirical}

Our investigation in the previous section led us to Algorithm \ref{algo:ours}, giving us a principled and precise strategy to construct the objective function $\widehat{\rdv{C}}_{n}$, but leaving the actual minimization procedure abstract. Here we make this concrete by implementing a simple gradient-based minimizer of this objective, and comparing this procedure with natural benchmarks from the literature.

\subsection{Methods to be compared}\label{sec:empirical_methods}

In the experiments to follow in \S{\ref{sec:empirical_sims}}--\S{\ref{sec:empirical_real}}, we compare our proposed procedure (denoted in figures as ``Modified Sun-Huber'') with three alternatives: traditional mean-based empirical risk minimization (denoted ``Vanilla ERM''), conditional value-at-risk (CVaR) \citep{curi2020a}, and the well-studied $\chi^{2}$-DRO risk \citep{duchi2019a,hashimoto2018a}. For reference, here we provide the population versions of the CVaR and $\chi^{2}$-DRO criteria used in the empirical tests to follow. First, it is well known (see \citet{rockafellar2000a}) that CVaR at quantile level $\xi$ can be represented as
\begin{align}\label{eqn:cvar_convenient}
\text{CVaR}_{\ddist}(h;\xi) = \inf_{a \in \RR}\left[ a +  \frac{1}{1-\xi}\exx_{\ddist}\left(\loss(h)-a\right)_{+} \right]
\end{align}
where $(x)_{+} \defeq \max\{0,x\}$. Similarly, DRO risk based on the Cressie-Read family of divergence functions, here denoted by $\text{DRO}_{\ddist}(h;\eta)$, is formulated for any $c > 1$ and $\eta > 0$ using
\begin{align}\label{eqn:dro_convenient}
\inf_{a \in \RR}\left[ a + \left(1+c(c-1)\eta\right)^{1/c}\left(\exx_{\ddist}\left(\loss(h)-a\right)_{+}^{c_{\ast}}\right)^{1/c_{\ast}} \right]
\end{align}
where $c_{\ast} \defeq c/(c-1)$, and $\chi^{2}$-DRO is the special case where $c=2$ \citep{duchi2019a,hashimoto2018a,zhai2021a}. The different ``robustness levels'' to be mentioned in \S{\ref{sec:empirical_sims}} correspond to different values of the re-parameterized quantity $\widetilde{\eta} \in (0,1)$, related to $\eta$ by the equality $\eta = (1/(1-\widetilde{\eta})-1)/2$. Just as our $\widehat{\rdv{C}}_{n}(h;a,b)$ is solved jointly in $(h,a,b)$, our empirical tests minimize the empirical versions of (\ref{eqn:cvar_convenient}) and (\ref{eqn:dro_convenient}) jointly in $(h,a)$.

\begin{figure}[t]
\centering
\includegraphics[width=0.25\textwidth]{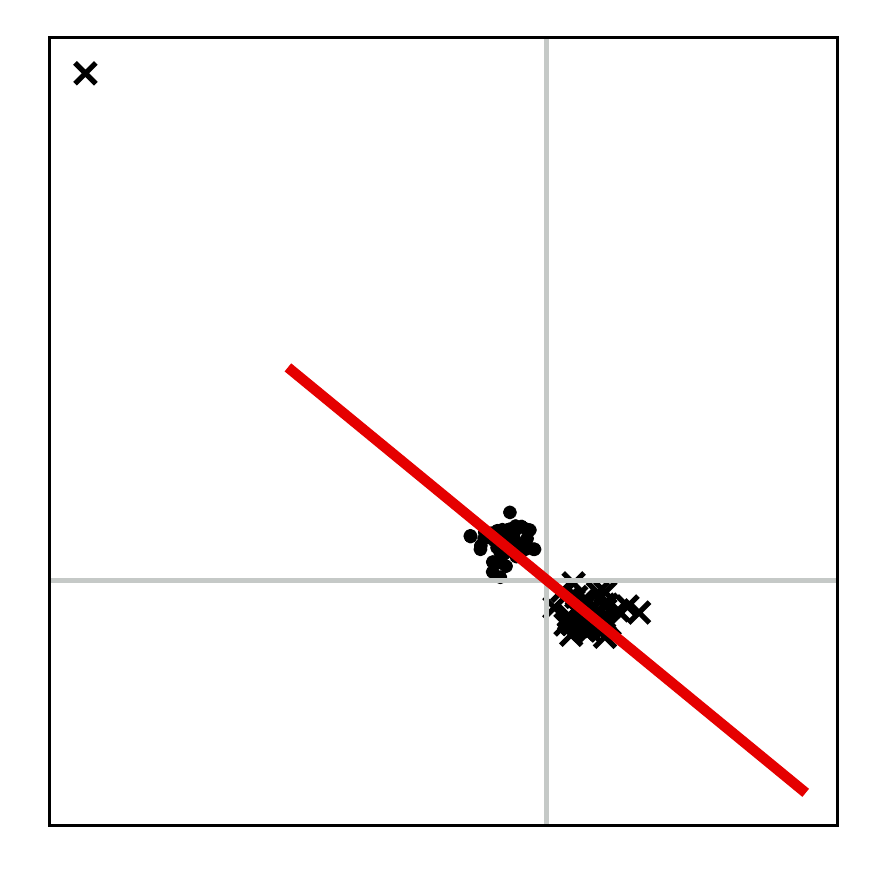}
\caption{2D classification example from \S{\ref{sec:empirical_sims}}. The red line represents the initial value used by each method.}
\label{fig:class2d_data}
\end{figure}

\begin{figure}[t]
\centering
\includegraphics[width=0.5\textwidth]{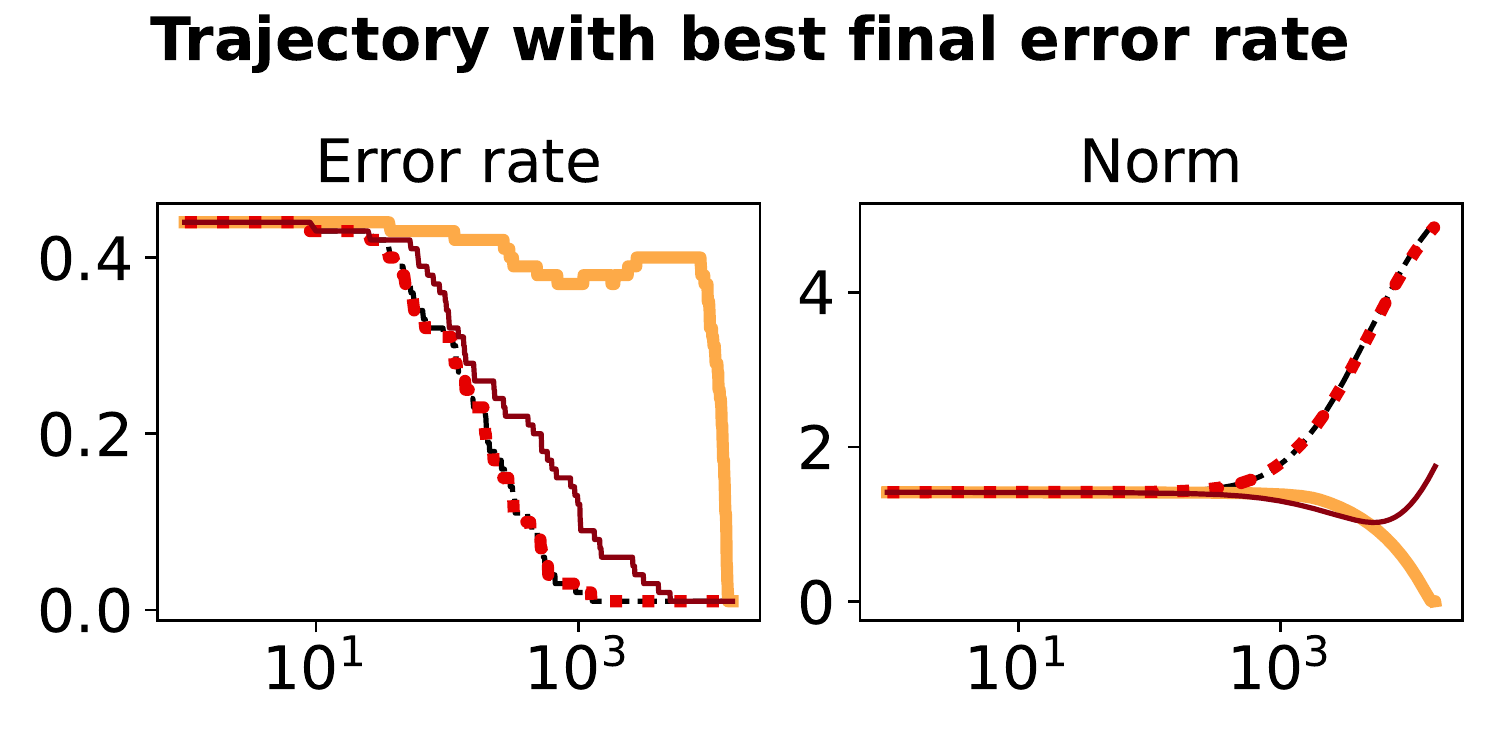}\\
\includegraphics[width=0.5\textwidth]{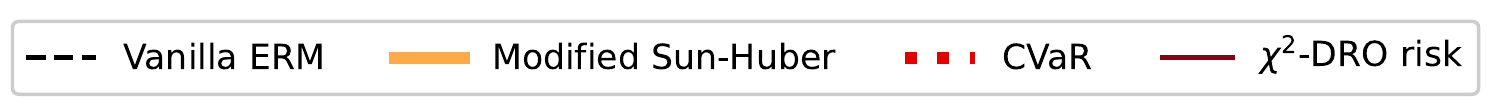}
\caption{From each method class, we show the classification error rate and Euclidean norm trajectories corresponding to the setting that achieved the best error rate after the final iteration.}
\label{fig:class2d_zeroone}
\end{figure}

\subsection{Simulated noisy classification on the plane}\label{sec:empirical_sims}

As a simplified and controlled setting to start with, we generate random data points on the plane which are mostly linearly separable, save for a single distant outlier (Figure \ref{fig:class2d_data}). Before we consider off-sample generalization, here we focus simply on the training loss distribution properties as a function of algorithm iterations.

\paragraph{Experiment setup}

We generate $n=100$ training data points using two Gaussian distributions on the plane to represent two classes, with each class having the same number of points. We choose a single point uniformly at random, and perturb it by multiplying the scalar -10. As mentioned in \S{\ref{sec:empirical_methods}}, we compare our proposed procedure (Modified Sun-Huber) with Vanilla ERM, CVaR, and $\chi^{2}$-DRO. In light of Algorithm \ref{algo:ours}, we set $\lambda = \log(n) / \sqrt{n} > \beta = \beta_{0}/\sqrt{n}$, and try a variety of $\beta_{0}$ values just for reference. For all the aforementioned methods, we set the base loss $\ell(\cdot)$ to the usual binary logistic loss (linear model), and run (batch) gradient descent on the empirical risk objectives implied by each of these methods, with a fixed step size of 0.01 over 15,000 iterations. Alternative settings of step size and iteration number were not tested. All methods are initialized at the same point, shown in Figure \ref{fig:class2d_data}.

\paragraph{Results and discussion}

In Figure \ref{fig:class2d_meansd}, we show the empirical mean-SD trajectories for the base loss, over algorithm iterations ($\log_{10}$ scale), for each method of interest. Using our notation, this is the sample version of $\msd_{\ddist}(h;1)$ in (\ref{eqn:mean_stdev_defn}); note that this differs from the $n$-dependent $\lambda$ setting that is explicit in our proposed method, and implicit in CVaR and $\chi^{2}$-DRO. All methods besides vanilla ERM have multiple settings that were tested, and the results for each are distinguished using curves of different color. Our method tests different values of $\beta_{0}$, CVaR tests different quantile levels, and DRO tests different robustness levels. Since Vanilla ERM is designed to optimize the average loss, it is perhaps not surprising that it fails in terms of the mean-SD objective. On the other hand, the proposed method (for any choice of $\beta_{0}$) is as good or better than all the competing methods. As a basic sanity check, in Figure \ref{fig:class2d_zeroone} we also consider the error rate (average zero-one loss) and model norm trajectories over iterations for each method. For each method, we plot just one trajectory, namely the one achieving the best final error rate. While our method is not designed to minimize the average loss and typical surrogate theory does not apply, the error rate is surprisingly good, albeit with slower convergence than the other methods. The error rate for CVaR matches that of Vanilla ERM; this is in fact the CVaR setting with the worst final mean-SD value. On the other hand, the proposed method performs well from both perspectives at once.

\subsection{Classification on real datasets}\label{sec:empirical_real}

We proceed to experiments using real-world datasets, some of which are orders of magnitude larger than the simple setup given in \S{\ref{sec:empirical_sims}}, and which include multi-class classification tasks.

\paragraph{Experiment setup}
We make use of four well-known datasets, all available from online repositories: \texttt{adult}, \texttt{australian}, \texttt{cifar10}, and \texttt{fashion\_mnist}. For multi-class datasets, we extend the binary logistic loss to the usual multi-class logistic regression loss under a linear model, with one linear model for each class. Features for all datasets are normalized to $[0,1]$, with one-hot representations of categorical features. The learning algorithms being compared here are the same as described in \S{\ref{sec:empirical_sims}}, except that now we implement each method using mini-batch stochastic gradient descent (batch size 32), and do 30 epochs (i.e., 30 passes over the training data). In addition, our proposed ``Modified Sun-Huber'' method performs almost identically for the range of $\beta_{0}$ values tested in \S{\ref{sec:empirical_sims}}, and thus we have simply fixed $\beta_{0} = 0.9$, so there is only one trajectory curve this time. On the other hand, we now try a range of step sizes for each method, choosing the best step size in terms of average (base) loss value on validation data for each method. We run five independent trials, and for each trial we randomly re-shuffle the dataset, taking 80\% for training, 10\% for validation (used to select step sizes), and 10\% for final testing.

\paragraph{Results and discussion}

Our main results are shown in Figure \ref{fig:real_traj_mstd} (next page), where once again we plot the trajectory of the mean-SD objective, but this time computed on test data, and given as a function of \emph{epoch number}, rather than individual iterations. Curves drawn represent averages taken over trials, and the lightly shaded region above/below each curve shows standard deviation over trials. Perhaps surprisingly, the very simple implementation of our proposed Algorithm \ref{algo:ours} (fixed step size, no regularization) works remarkably well on a number of datasets. From the perspective of mean-SD minimization, for three our of four datasets, the proposed method is far better than Vanilla ERM, and as good or better than even the best settings of CVaR and DRO viewed after the fact. Regarding the sub-standard performance observed on \texttt{fashion\_mnist}, detailed analysis shows that more fine-tuned settings of $\alpha$ and $\beta$ can readily bring the method up to par; the non-convex and non-smooth nature of $\widehat{\rdv{C}}_{n}$ naturally means that some tasks will require more careful settings than are captured by our Algorithm \ref{algo:ours}, and indeed will take explicit account of the optimizer to be used. We leave both the theoretical grounding and empirical testing of such optimizer-aligned mean-SD minimizers for future work.

\begin{figure}[t]
\centering
\includegraphics[width=\textwidth]{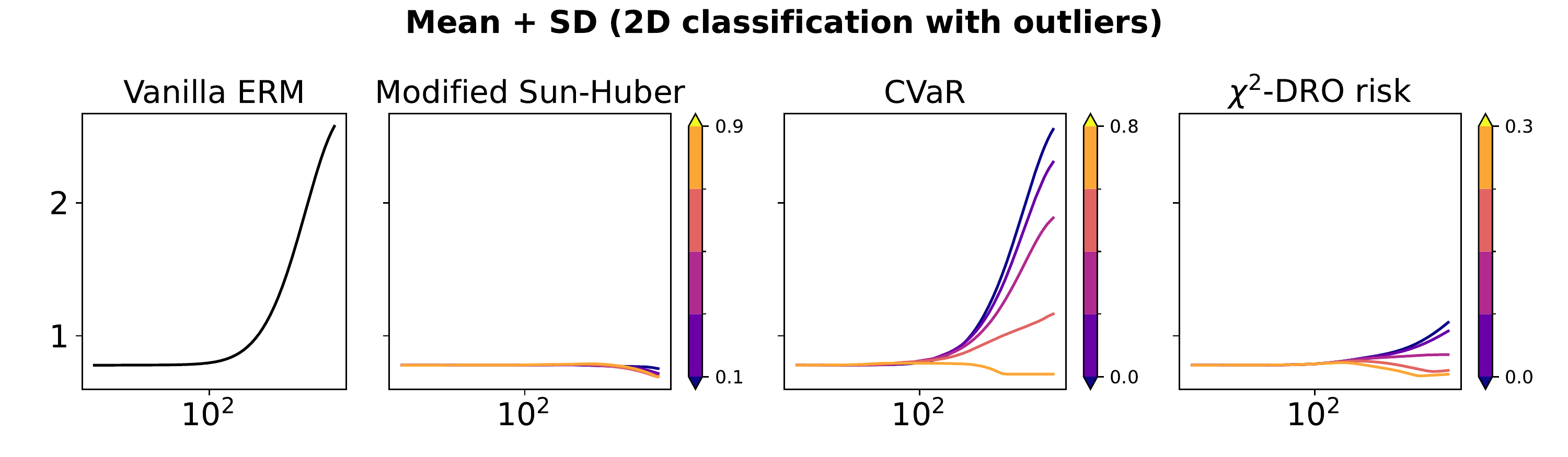}
\caption{Trajectory of the (empirical) mean-SD objective (\ref{eqn:mean_stdev_defn}) over iterations. Colors correspond to different choices from each class: $\beta_{0}$ for Modified Sun-Huber, quantile level for CVaR, and constraint level for DRO.}
\label{fig:class2d_meansd}
\end{figure}

\begin{figure*}[h!]
\centering
\includegraphics[width=\textwidth]{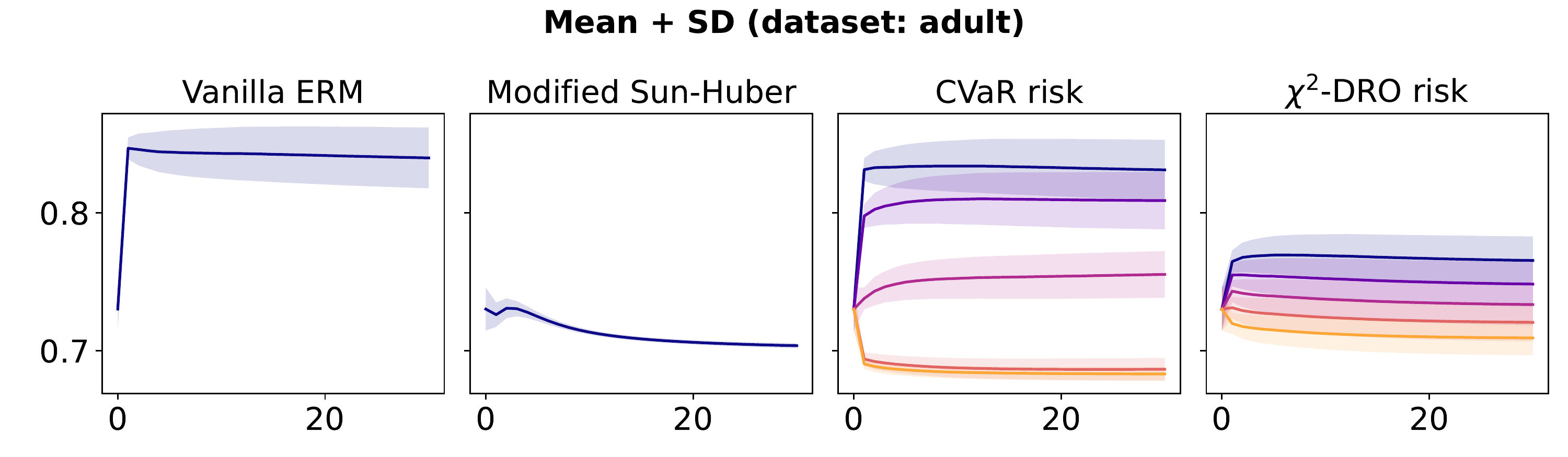}\\
\bigskip
\includegraphics[width=\textwidth]{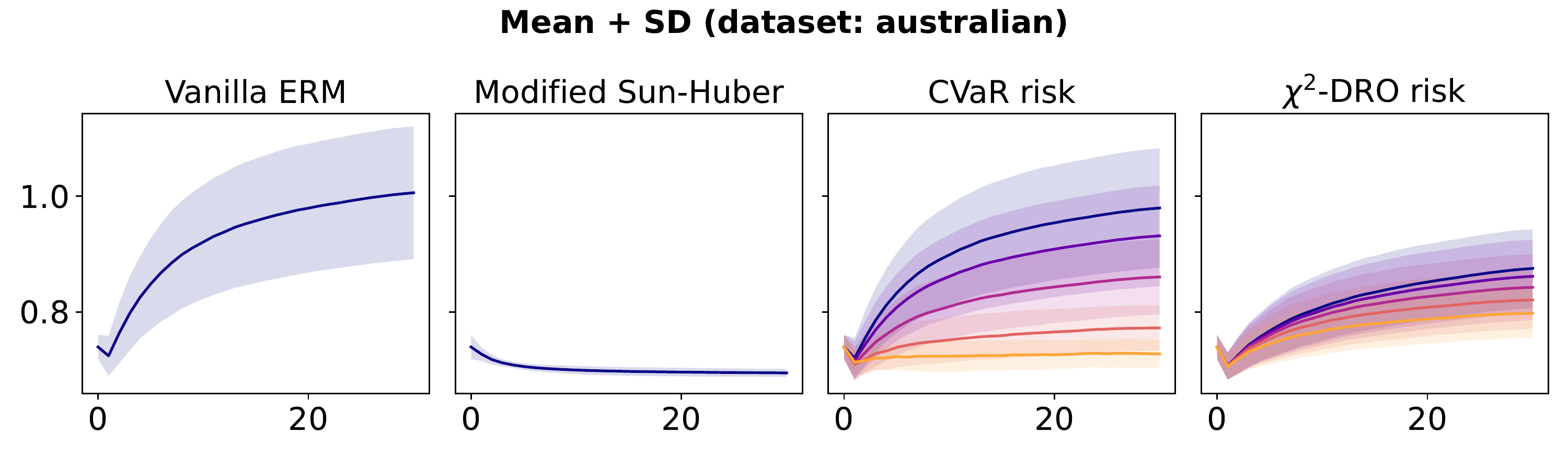}\\
\bigskip
\includegraphics[width=\textwidth]{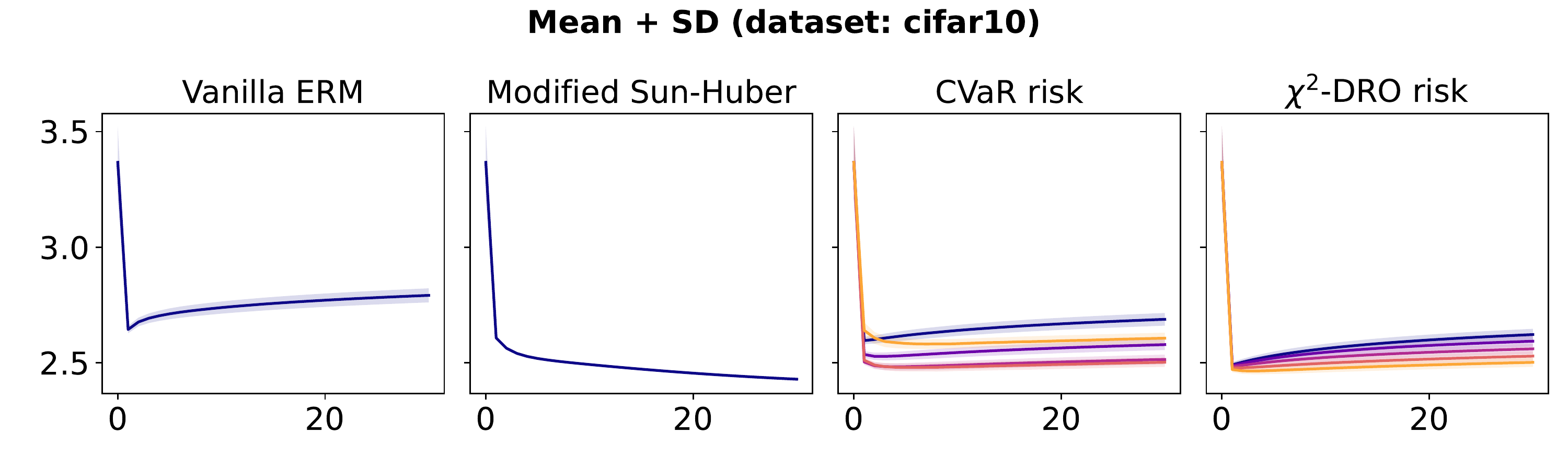}\\
\bigskip
\includegraphics[width=\textwidth]{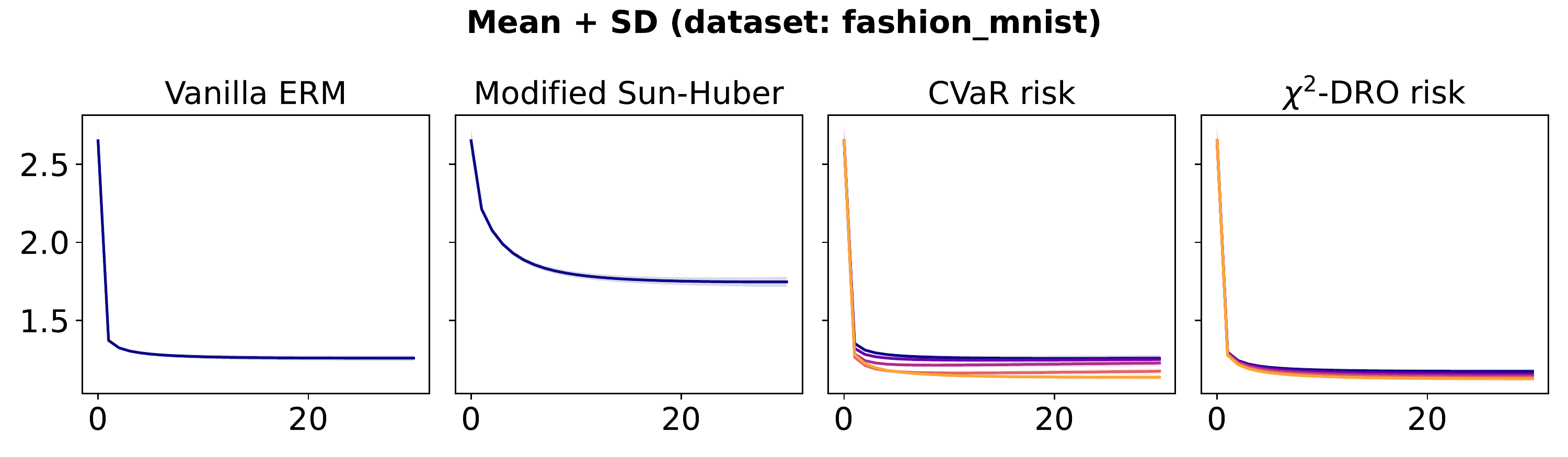}
\caption{Mean-SD trajectories on real-world datasets as described in \S{\ref{sec:empirical_real}}, given as a function of epochs and averaged over multiple independent trials. Coloring for CVaR and DRO is analogous to that of Figure \ref{fig:class2d_meansd}.}
\label{fig:real_traj_mstd}
\end{figure*}

\clearpage

\section*{Acknowledgements}
This work was supported by JSPS KAKENHI Grant Number 22H03646. Thanks also go to anonymous reviewers whose careful reading and insightful comments contributed to the final version of this paper.

\bibliography{../refs/refs}

\begin{thebibliography}{}

\bibitem[Ash and Dol{\'e}ans-Dade, 2000]{ash2000a}
Ash, R.~B. and Dol{\'e}ans-Dade, C.~A. (2000).
\newblock {\em Probability and Measure Theory}.
\newblock Academic Press, 2nd edition.

\bibitem[Barron, 2019]{barron2019a}
Barron, J.~T. (2019).
\newblock A general and adaptive robust loss function.
\newblock In {\em Proceedings of the IEEE/CVF Conference on Computer Vision and
  Pattern Recognition}, pages 4331--4339.

\bibitem[Ben-Tal et~al., 2013]{bental2013a}
Ben-Tal, A., Den~Hertog, D., De~Waegenaere, A., Melenberg, B., and Rennen, G.
  (2013).
\newblock Robust solutions of optimization problems affected by uncertain
  probabilities.
\newblock {\em Management Science}, 59(2):341--357.

\bibitem[Boyd and Vandenberghe, 2004]{boyd2004ConvOpt}
Boyd, S. and Vandenberghe, L. (2004).
\newblock {\em Convex Optimization}.
\newblock Cambridge University Press.

\bibitem[Brownlees et~al., 2015]{brownlees2015a}
Brownlees, C., Joly, E., and Lugosi, G. (2015).
\newblock Empirical risk minimization for heavy-tailed losses.
\newblock {\em The Annals of Statistics}, 43(6):2507--2536.

\bibitem[Catoni, 2012]{catoni2012a}
Catoni, O. (2012).
\newblock Challenging the empirical mean and empirical variance: a deviation
  study.
\newblock {\em Annales de l'Institut Henri Poincar{\'e}, Probabilit{\'e}s et
  Statistiques}, 48(4):1148--1185.

\bibitem[Curi et~al., 2020]{curi2020a}
Curi, S., Levy, K.~Y., Jegelka, S., and Krause, A. (2020).
\newblock Adaptive sampling for stochastic risk-averse learning.
\newblock In {\em Advances in Neural Information Processing Systems 33 (NeurIPS
  2020)}, pages 1036--1047.

\bibitem[Davis and Drusvyatskiy, 2019]{davis2019b}
Davis, D. and Drusvyatskiy, D. (2019).
\newblock Stochastic model-based minimization of weakly convex functions.
\newblock {\em SIAM Journal on Optimization}, 29(1):207--239.

\bibitem[Devroye et~al., 1996]{devroye1996ProbPR}
Devroye, L., Gy{\"o}rfi, L., and Lugosi, G. (1996).
\newblock {\em A Probabilistic Theory of Pattern Recognition}.
\newblock Springer.

\bibitem[Devroye et~al., 2016]{devroye2016a}
Devroye, L., Lerasle, M., Lugosi, G., and Oliveira, R.~I. (2016).
\newblock Sub-{G}aussian mean estimators.
\newblock {\em The Annals of Statistics}, 44(6):2695--2725.

\bibitem[Duchi and Namkoong, 2019]{duchi2019a}
Duchi, J. and Namkoong, H. (2019).
\newblock Variance-based regularization with convex objectives.
\newblock {\em Journal of Machine Learning Research}, 20(68):1--55.

\bibitem[Ghadimi and Lan, 2013]{ghadimi2013a}
Ghadimi, S. and Lan, G. (2013).
\newblock Stochastic first- and zeroth-order methods for nonconvex stochastic
  programming.
\newblock {\em SIAM Journal on Optimization}, 23(4):2341--2368.

\bibitem[Hashimoto et~al., 2018]{hashimoto2018a}
Hashimoto, T.~B., Srivastava, M., Namkoong, H., and Liang, P. (2018).
\newblock Fairness without demographics in repeated loss minimization.
\newblock In {\em Proceedings of the 35th International Conference on Machine
  Learning (ICML)}, volume~80 of {\em Proceedings of Machine Learning
  Research}, pages 1929--1938.

\bibitem[Holland and Tanabe, 2023]{holland2023survey}
Holland, M.~J. and Tanabe, K. (2023).
\newblock A survey of learning criteria going beyond the usual risk.
\newblock {\em Journal of Artificial Intelligence Research}, 73:781--821.

\bibitem[Hsu and Sabato, 2016]{hsu2016a}
Hsu, D. and Sabato, S. (2016).
\newblock Loss minimization and parameter estimation with heavy tails.
\newblock {\em Journal of Machine Learning Research}, 17(18):1--40.

\bibitem[Hu et~al., 2022]{hu2022a}
Hu, S., Wang, X., and Lyu, S. (2022).
\newblock Rank-based decomposable losses in machine learning: A survey.
\newblock {\em arXiv preprint arXiv:2207.08768v1}.

\bibitem[Huber, 1964]{huber1964a}
Huber, P.~J. (1964).
\newblock Robust estimation of a location parameter.
\newblock {\em The Annals of Mathematical Statistics}, 35(1):73--101.

\bibitem[Huber and Ronchetti, 2009]{huber2009a}
Huber, P.~J. and Ronchetti, E.~M. (2009).
\newblock {\em Robust Statistics}.
\newblock John Wiley \& Sons, 2nd edition.

\bibitem[Lee et~al., 2020]{lee2020a}
Lee, J., Park, S., and Shin, J. (2020).
\newblock Learning bounds for risk-sensitive learning.
\newblock In {\em Advances in Neural Information Processing Systems 33 (NeurIPS
  2020)}, pages 13867--13879.

\bibitem[Li et~al., 2021]{li2021a}
Li, T., Beirami, A., Sanjabi, M., and Smith, V. (2021).
\newblock Tilted empirical risk minimization.
\newblock In {\em The 9th International Conference on Learning Representations
  (ICLR)}.

\bibitem[Luenberger, 1969]{luenberger1969Book}
Luenberger, D.~G. (1969).
\newblock {\em Optimization by Vector Space Methods}.
\newblock John Wiley \& Sons.

\bibitem[Lugosi and Mendelson, 2019]{lugosi2019b}
Lugosi, G. and Mendelson, S. (2019).
\newblock Mean estimation and regression under heavy-tailed distributions: A
  survey.
\newblock {\em Foundations of Computational Mathematics}, 19(5):1145--1190.

\bibitem[Markowitz, 1952]{markowitz1952a}
Markowitz, H. (1952).
\newblock Portfolio selection.
\newblock {\em Journal of Finance}, 7(1):77--91.

\bibitem[Maurer and Pontil, 2009]{maurer2009a}
Maurer, A. and Pontil, M. (2009).
\newblock Empirical {B}ernstein bounds and sample variance penalization.
\newblock In {\em Proceedings of the 22nd Conference on Learning Theory
  (COLT)}.

\bibitem[Medina and Yang, 2021]{maurer2021a}
Medina, A.~M. and Yang, S. (2021).
\newblock Robust unsupervised learning via {L}-statistic minimization.
\newblock In {\em 38th International Conference on Machine Learning (ICML)},
  volume 139 of {\em Proceedings of Machine Learning Research}, pages
  7524--7533.

\bibitem[Menon et~al., 2021]{menon2021a}
Menon, A.~K., Jayasumana, S., Rawat, A.~S., Jain, H., Veit, A., and Kumar, S.
  (2021).
\newblock Long-tail learning via logit adjustment.
\newblock In {\em The 9th International Conference on Learning Representations
  (ICLR)}.

\bibitem[Mohri et~al., 2012]{mohri2012Foundations}
Mohri, M., Rostamizadeh, A., and Talwalkar, A. (2012).
\newblock {\em Foundations of Machine Learning}.
\newblock MIT Press.

\bibitem[Nesterov, 2004]{nesterov2004ConvOpt}
Nesterov, Y. (2004).
\newblock {\em Introductory Lectures on Convex Optimization: A Basic Course}.
\newblock Springer.

\bibitem[Rey, 1983]{rey1983Robust}
Rey, W. J.~J. (1983).
\newblock {\em Introduction to Robust and Quasi-Robust Statistical Methods}.
\newblock Springer.

\bibitem[Rockafellar and Uryasev, 2000]{rockafellar2000a}
Rockafellar, R.~T. and Uryasev, S. (2000).
\newblock Optimization of conditional value-at-risk.
\newblock {\em Journal of Risk}, 2:21--42.

\bibitem[Rockafellar and Uryasev, 2013]{rockafellar2013a}
Rockafellar, R.~T. and Uryasev, S. (2013).
\newblock The fundamental risk quadrangle in risk management, optimization and
  statistical estimation.
\newblock {\em Surveys in Operations Research and Management Science},
  18(1-2):33--53.

\bibitem[Royset, 2022]{royset2022arxiv}
Royset, J.~O. (2022).
\newblock Risk-adaptive approaches to learning and decision making: A survey.
\newblock {\em arXiv preprint arXiv:2212.00856}.

\bibitem[Shapiro, 2017]{shapiro2017a}
Shapiro, A. (2017).
\newblock Distributionally robust stochastic programming.
\newblock {\em SIAM Journal on Optimization}, 27(4):2258--2275.

\bibitem[Sun, 2021]{sun2021arxiv}
Sun, Q. (2021).
\newblock Do we need to estimate the variance in robust mean estimation?
\newblock {\em arXiv preprint arXiv:2107.00118v1}.

\bibitem[Vapnik, 1999]{vapnik1999NSLT}
Vapnik, V.~N. (1999).
\newblock {\em The Nature of Statistical Learning Theory}.
\newblock Statistics for Engineering and Information Science. Springer, 2nd
  edition.

\bibitem[Zhai et~al., 2021]{zhai2021a}
Zhai, R., Dan, C., Kolter, J.~Z., and Ravikumar, P. (2021).
\newblock {DORO}: Distributional and outlier robust optimization.
\newblock In {\em 38th International Conference on Machine Learning (ICML)},
  volume 139 of {\em Proceedings of Machine Learning Research}, pages
  12345--12355.

\end{thebibliography}

\appendix

\section{Technical appendix}

\subsection{Basic facts}

Assuming $\rho$ is defined as in (\ref{eqn:sun_huber_fn}), let us consider the function
\begin{align}
f(x,a,b) & \defeq \alpha a + \beta b + b\rho\left(\frac{x-a}{b}\right)\\
& = \alpha a + \beta b + \sqrt{(x-a)^{2}+b^{2}} - b\\
& = \alpha a + \sqrt{(x-a)^{2}+b^{2}} - (1-\beta)b.
\end{align}
The partial derivatives are as follows.
\begin{align}
\label{eqn:partial_1_x}
\partial_{x}f(x,a,b) & = \frac{x-a}{\sqrt{(x-a)^{2}+b^{2}}}\\
\label{eqn:partial_1_a}
\partial_{a}f(x,a,b) & = \alpha - \frac{x-a}{\sqrt{(x-a)^{2}+b^{2}}}\\
\label{eqn:partial_1_b}
\partial_{b}f(x,a,b) & = \frac{b}{\sqrt{(x-a)^{2}+b^{2}}} - (1-\beta)
\end{align}
The corresponding second derivatives are as follows.
\begin{align}
\label{eqn:partial_2_xx}
\partial_{x}^{2}f(x,a,b) & = \frac{1}{\sqrt{(x-a)^{2}+b^{2}}} - \frac{(x-a)^{2}}{((x-a)^{2}+b^{2})^{3/2}} = \frac{b^{2}}{((x-a)^{2}+b^{2})^{3/2}}\\
\label{eqn:partial_2_aa}
\partial_{a}^{2}f(x,a,b) & = \frac{1}{\sqrt{(x-a)^{2}+b^{2}}} - \frac{(x-a)^{2}}{((x-a)^{2}+b^{2})^{3/2}} = \frac{b^{2}}{((x-a)^{2}+b^{2})^{3/2}}\\
\label{eqn:partial_2_bb}
\partial_{b}^{2}f(x,a,b) & = \frac{1}{\sqrt{(x-a)^{2}+b^{2}}} - \frac{b^{2}}{((x-a)^{2}+b^{2})^{3/2}} = \frac{(x-a)^{2}}{((x-a)^{2}+b^{2})^{3/2}}
\end{align}
The remaining elements of the Hessian of $f(x,a,b)$ follow easily, given as follows.
\begin{align}
\label{eqn:partial_2_ax}
\partial_{a}\partial_{x}f(x,a,b) & = \frac{-1}{\sqrt{(x-a)^{2}+b^{2}}} + \frac{(x-a)^{2}}{((x-a)^{2}+b^{2})^{3/2}} = \frac{-b^{2}}{((x-a)^{2}+b^{2})^{3/2}}\\
\label{eqn:partial_2_bx}
\partial_{b}\partial_{x}f(x,a,b) & = \frac{-b(x-a)}{((x-a)^{2}+b^{2})^{3/2}}\\
\partial_{b}\partial_{a}f(x,a,b) & = \frac{b(x-a)}{((x-a)^{2}+b^{2})^{3/2}}
\end{align}

\begin{lem}[Useful inequalities]\label{lem:helper_ineq}
\begin{align}\label{eqn:helper_ineq_rec1x}
\frac{1}{1+x} \leq 1 - \frac{x}{2}, \quad 0 \leq x \leq 1.
\end{align}
\begin{align}\label{eqn:helper_ineq_sun}
(1+x)^{c} \geq 1+cx, \quad x \geq -1, c \in \RR \setminus (0,1).
\end{align}
\end{lem}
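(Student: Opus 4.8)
The plan is to dispatch the two inequalities independently, since they are logically unrelated: \eqref{eqn:helper_ineq_rec1x} is a one-line algebraic rearrangement, while \eqref{eqn:helper_ineq_sun} is the classical Bernoulli inequality, which I would establish via convexity of the power function.

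For \eqref{eqn:helper_ineq_rec1x}, note that $1 + x > 0$ on the stated range $0 \leq x \leq 1$, so multiplying both sides by $1+x$ reduces the claim to $1 \leq (1 - x/2)(1+x) = 1 + x/2 - x^2/2$, i.e.\ to $0 \leq (x/2)(1-x)$, which holds for every $x \in [0,1]$. No subtlety arises here.

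For \eqref{eqn:helper_ineq_sun}, I would first dispose of the degenerate cases: when $c = 0$ both sides equal $1$; at the endpoint $x = -1$, if $c < 0$ the left side is $+\infty$, and if $c \geq 1$ the left side is $0$ while the right side is $1 - c \leq 0$, so the inequality holds either way. For $x > -1$ and $c \in (-\infty,0) \cup [1,\infty)$, substitute $t \defeq 1 + x > 0$; the claim becomes $t^c \geq 1 + c(t-1)$, which asserts that the graph of $g(t) \defeq t^c$ on $(0,\infty)$ lies above its tangent line at $t = 1$. This is exactly convexity of $g$: one computes $g''(t) = c(c-1)t^{c-2}$, and $c(c-1) \geq 0$ holds precisely when $c \leq 0$ or $c \geq 1$ — that is, exactly the stated restriction $c \in \RR \setminus (0,1)$, and also the reason the inequality reverses for $c \in (0,1)$ (where $g$ is concave). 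Since a differentiable convex function dominates each of its tangent lines, $t^c \geq g(1) + g'(1)(t-1) = 1 + c(t-1)$, and re-substituting $t = 1 + x$ gives \eqref{eqn:helper_ineq_sun}.

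There is no genuinely hard step; the only things to keep straight are the endpoint $x = -1$ and which sign of $c$ makes $t \mapsto t^c$ convex versus concave. If one prefers to avoid invoking the tangent-line property of convex functions, an equivalent route is to study $\psi(x) \defeq (1+x)^c - 1 - cx$ directly on $(-1,\infty)$: here $\psi(0) = 0$ and $\psi'(x) = c\bigl((1+x)^{c-1} - 1\bigr)$, and a short sign analysis shows $\psi' \leq 0$ on $(-1,0)$ and $\psi' \geq 0$ on $(0,\infty)$ for both $c \leq 0$ and $c \geq 1$, so $\psi$ attains its minimum value $0$ at $x = 0$, which is the desired inequality.
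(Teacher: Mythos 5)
Your proof is correct. The paper states Lemma \ref{lem:helper_ineq} without any proof, treating both inequalities as standard facts, so there is no in-paper argument to compare against; your verification --- clearing the positive denominator to reduce (\ref{eqn:helper_ineq_rec1x}) to $0 \leq (x/2)(1-x)$, and establishing the Bernoulli inequality (\ref{eqn:helper_ineq_sun}) via convexity of $t \mapsto t^{c}$ and the tangent line at $t=1$, with the endpoint $x=-1$ and the sign cases of $c$ handled explicitly --- is complete and fills the gap the paper leaves open.
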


\subsection{Convexity and smoothness}\label{sec:convexity_smoothness}

\begin{lem}\label{lem:helper_convex_recsq1}
The map $x \mapsto 1 / \sqrt{1+x}$ is convex on $[0,\infty)$.
\end{lem}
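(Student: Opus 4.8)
The plan is to show convexity of $g(x) \defeq 1/\sqrt{1+x}$ on $[0,\infty)$ by the most elementary route available, namely computing the second derivative and checking its sign. First I would write $g(x) = (1+x)^{-1/2}$ and differentiate twice: $g^{\prime}(x) = -\tfrac{1}{2}(1+x)^{-3/2}$ and $g^{\prime\prime}(x) = \tfrac{3}{4}(1+x)^{-5/2}$. Since $x \geq 0$ implies $1+x \geq 1 > 0$, the quantity $(1+x)^{-5/2}$ is well-defined and strictly positive, so $g^{\prime\prime}(x) > 0$ throughout $[0,\infty)$. As $g$ is twice continuously differentiable on the open interval $(0,\infty)$ and continuous at the endpoint $x=0$, nonnegativity of $g^{\prime\prime}$ on $(0,\infty)$ gives convexity on all of $[0,\infty)$ by the standard second-derivative characterization of convex functions.

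There is essentially no obstacle here; the only point requiring a word of care is that the second-derivative test applies on the open interval and one wants to conclude convexity on the closed half-line $[0,\infty)$. This is handled by continuity: a function continuous on $[0,\infty)$ and convex on $(0,\infty)$ is convex on $[0,\infty)$, since the defining midpoint (or secant) inequality passes to the limit as an endpoint is approached. Alternatively, one could avoid differentiation altogether and argue that $g$ is a decreasing convex function of the convex quantity $\sqrt{1+x}$ — but the composition rule for (decreasing convex) $\circ$ (concave) would be needed, so the direct computation is cleaner and self-contained.

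If a derivative-free proof is preferred for robustness, the alternative I would use is to verify the secant inequality directly: for $0 \leq x < y$ and $t \in [0,1]$, one shows $g(tx + (1-t)y) \leq t\,g(x) + (1-t)\,g(y)$. Writing $u = \sqrt{1+x}$, $v = \sqrt{1+y}$ (so $0 < u \leq v$), this reduces to a statement about the convexity of the reciprocal function on positive reals combined with the concavity of $s \mapsto \sqrt{1+s}$; but chasing the inequalities is more work than the one-line second-derivative computation, so I would present the latter. Either way, the essential content is just that $(1+x)^{-5/2} \geq 0$ on the relevant domain.
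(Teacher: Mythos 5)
Your computation is correct: $g''(x) = \tfrac{3}{4}(1+x)^{-5/2} > 0$ on $[0,\infty)$, and this establishes convexity; the paper states this lemma without proof, treating it as elementary, so your second-derivative verification is exactly the intended (and standard) argument. The only minor remark is that your endpoint discussion is unnecessary, since $g$ is smooth on all of $(-1,\infty) \supset [0,\infty)$, so the second-derivative test applies directly on the closed half-line without any limiting argument.
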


\begin{lem}[Properties of partial objective]\label{lem:partial_objective}
With $\rho$ as in (\ref{eqn:sun_huber_fn}) and $\beta \geq 0$, the function
\begin{align*}
(x,b) \mapsto \beta b + b\rho\left(\frac{x}{b}\right)
\end{align*}
is convex and $(1+\max\{1-\beta,\beta\})$-Lipschitz (in $\Abs{\cdot}_{1}$) on $\RR \times (0,\infty)$, but its gradient is not (globally) Lipschitz, and thus the function is not smooth.\footnote{We prove that the Hessian's norm is unbounded, which implies (via \citet[Thm.~2.1.6]{nesterov2004ConvOpt}) that the convex function of interest cannot be smooth.}
\end{lem}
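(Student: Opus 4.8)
The plan is to analyze the function $g(x,b) \defeq \beta b + b\rho(x/b) = \sqrt{x^{2}+b^{2}} - (1-\beta)b$ directly via its partial derivatives, which were already computed in the preceding ``Basic facts'' subsection (set $a=0$ in (\ref{eqn:partial_1_x})--(\ref{eqn:partial_2_bb})). Three things must be shown: convexity, the Lipschitz constant in $\Abs{\cdot}_{1}$, and failure of smoothness.

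\textbf{Convexity.} The term $-(1-\beta)b$ is linear in $b$, so it suffices to show $(x,b) \mapsto \sqrt{x^{2}+b^{2}}$ is convex on $\RR \times (0,\infty)$. I would note this is simply the Euclidean norm $\Abs{(x,b)}_{2}$, which is convex on all of $\RR^{2}$ (it is a norm), hence convex on the half-plane. Alternatively, for a self-contained computation, one can exhibit the Hessian from (\ref{eqn:partial_2_xx}), (\ref{eqn:partial_2_bb}), (\ref{eqn:partial_2_bx}): with $r \defeq (x^{2}+b^{2})^{3/2}$ the Hessian is $\frac{1}{r}\begin{pmatrix} b^{2} & -xb \\ -xb & x^{2}\end{pmatrix}$, which is positive semidefinite since its trace $x^{2}+b^{2} \geq 0$ and determinant $x^{2}b^{2} - x^{2}b^{2} = 0$ are both nonnegative. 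Either route is immediate; the norm argument is cleanest.

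\textbf{Lipschitz bound.} Since $g$ is differentiable on $\RR \times (0,\infty)$, it suffices to bound the dual norm of the gradient; because the domain metric is $\Abs{\cdot}_{1}$, the relevant bound is $\max\{\abs{\partial_{x}g}, \abs{\partial_{b}g}\} \leq L$. From (\ref{eqn:partial_1_x}) with $a=0$, $\abs{\partial_{x}g} = \abs{x}/\sqrt{x^{2}+b^{2}} \leq 1$. From (\ref{eqn:partial_1_b}) with $a=0$, $\partial_{b}g = b/\sqrt{x^{2}+b^{2}} - (1-\beta)$, and since $b/\sqrt{x^{2}+b^{2}} \in [0,1]$ on the domain, this lies in $[-(1-\beta), \beta]$ when $\beta \leq 1$ (and more generally one checks the endpoints), so $\abs{\partial_{b}g} \leq \max\{1-\beta,\beta\}$ when $\beta \in [0,1]$; for $\beta > 1$ one gets $\abs{\partial_{b}g} \leq \beta = \max\{1-\beta,\beta\}$ as well, noting $1-\beta$ could be negative but its absolute value $\beta-1 < \beta$. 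Hence $L = 1 + \max\{1-\beta,\beta\}$ works (in fact $\max\{1, \max\{1-\beta,\beta\}\}$ suffices, but the stated looser constant follows a fortiori and matches how such bounds are typically quoted for $\Abs{\cdot}_{1}$ via summing coordinate bounds). I would present it via: for $u = (x,b)$, $\abs{g(u) - g(u')} \leq \sup \Abs{\nabla g}_{\infty} \Abs{u-u'}_{1} \leq (1+\max\{1-\beta,\beta\})\Abs{u-u'}_{1}$, taking care that the segment joining two points in $\RR \times (0,\infty)$ stays in the (convex) domain.

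\textbf{Non-smoothness.} Per the footnote, the strategy is to show the Hessian norm is unbounded, then invoke \citet[Thm.~2.1.6]{nesterov2004ConvOpt}. From the computed Hessian $\frac{1}{(x^{2}+b^{2})^{3/2}}\begin{pmatrix} b^{2} & -xb \\ -xb & x^{2}\end{pmatrix}$, its operator norm equals its largest eigenvalue, which is $\frac{x^{2}+b^{2}}{(x^{2}+b^{2})^{3/2}} = \frac{1}{\sqrt{x^{2}+b^{2}}}$ (the matrix is rank one with trace $x^{2}+b^{2}$). Letting $(x,b) \to (0,0)$ within the domain, e.g.\ $x=0$, $b \downarrow 0$, this blows up to $+\infty$. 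Hence no global Lipschitz constant for $\nabla g$ exists on $\RR \times (0,\infty)$, so $g$ is not smooth. The main obstacle — really the only subtlety — is bookkeeping around the open boundary $b=0$: one must be slightly careful that the domain $\RR \times (0,\infty)$ is convex (so the mean-value/gradient argument for the Lipschitz bound is valid), and that the non-smoothness witness is a sequence \emph{inside} the domain approaching its boundary rather than a point where $g$ fails to be defined. Everything else is direct substitution into formulas already derived in the excerpt.
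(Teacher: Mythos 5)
Your proof is correct and follows essentially the same route as the paper's: bound the partial derivatives to get the Lipschitz constant, verify the Hessian is positive semidefinite for convexity, and show the Hessian norm blows up as $b \to 0_{+}$ to rule out smoothness. Your refinements—observing that $\sqrt{x^{2}+b^{2}}$ is just the Euclidean norm, handling $\beta \geq 1$ explicitly, and computing the exact operator norm $1/\sqrt{x^{2}+b^{2}}$ of the rank-one Hessian rather than lower-bounding it at $x=b$—are minor improvements on the same argument.
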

\begin{proof}[Proof of Lemma \ref{lem:partial_objective}]
% % Lipschitzness.
For notational convenience, setting $0 < \beta < 1$, let us denote
\begin{align*}
g(x,b) \defeq \beta b + b \rho(x/b), \qquad x \in \RR, b > 0
\end{align*}
with $\rho$ as in (\ref{eqn:sun_huber_fn}). From the partial derivatives (\ref{eqn:partial_1_x}) and (\ref{eqn:partial_1_b}), it is clear that we have
\begin{align*}
-1 \leq \partial_{x}g(x,b) \leq 1, \qquad -(1-\beta) \leq \partial_{b}g(x,b) \leq \beta
\end{align*}
when evaluated at any choice of $x \in \RR$ and $b > 0$. It follows that the gradient norm can be bounded as
\begin{align*}
\Abs{\nabla g(x,b)}_{1} \leq 1 + \max\{(1-\beta), \beta\}
\end{align*}
and thus $g(\cdot)$ is Lipschitz continuous in $\Abs{\cdot}_{1}$ (and also $\Abs{\cdot}_{2}$).\footnote{That bounded gradients imply Lipschitz continuity is a general fact on linear spaces \citep[\S{7.3}, Prop.~2]{luenberger1969Book}.}

% % Convexity of partial objective.
Next, let us denote the Hessian of $g(\cdot)$ evaluated at $(x,b)$ by $H$. Basic calculus gives us the simple form
\begin{align*}
H \defeq \frac{1}{(x^{2}+b^{2})^{3/2}}
\begin{bmatrix}
b^{2} & -xb\\
-xb & x^{2}
\end{bmatrix}
\end{align*}
and for any pair of real values $\bm{u}=(u_{1},u_{2})$, we have
\begin{align}\label{eqn:basic_sun_1}
\langle H\bm{u}, \bm{u}\rangle = \frac{1}{(x^{2}+b^{2})^{3/2}}(u_{1}b - u_{2}x)^{2} \geq 0.
\end{align}
Since this holds for any choice of $x \in \RR$ and $b > 0$, the Hessian is thus positive semi-definite, implying that $g(\cdot)$ is (jointly) convex \citep[Thm.~2.1.4]{nesterov2004ConvOpt}.

% % Non-smooth.
On the other hand, the function $g(\cdot)$ is not smooth. To see this, first note that having chosen any $\bm{u}$ such that $\Abs{\bm{u}} \leq 1$, we have that the (operator) norm is bounded below as
\begin{align*}
\Abs{H} = \sup_{\Abs{\bm{u}^{\prime}} \leq 1} \left[ \sup_{\Abs{\bm{u}^{\prime\prime}} \leq 1} \langle H\bm{u}^{\prime}, \bm{u}^{\prime\prime} \rangle \right] \geq \langle H\bm{u}, \bm{u}\rangle.
\end{align*}
Then, as a concrete example, consider setting $x = b$, with $\bm{u}=(u_{1},u_{2})$ such that $u_{1} \neq u_{2}$. Recalling the lower bound (\ref{eqn:basic_sun_1}), we have
\begin{align*}
\|H\| \geq \frac{b^{2}}{(2b^{2})^{3/2}}(u_{1} - u_{2})^{2} = \frac{(u_{1} - u_{2})^{2}}{(\sqrt{2})^{3}b} \to \infty
\end{align*}
in the limit as $b \to 0_{+}$. As such, the gradient of $g(\cdot)$ cannot be Lipschitz continuous on $\RR \times (0,\infty)$, and thus $g(\cdot)$ is not smooth \citep[Thm.~2.1.6]{nesterov2004ConvOpt}.
\end{proof}

\section{Additional proofs}\label{sec:appendix_proofs}

\begin{proof}[Proof of Proposition \ref{prop:crit_optimized_scale}]
To begin, note that the function
\begin{align}\label{eqn:crit_optimized_scale_1}
b \mapsto b \exx_{\ddist}\rho\left(\frac{\loss(h)-a}{b}\right) = \exx_{\ddist}\left[\sqrt{(\loss(h)-a)^{2}+b^{2}}-b\right]
\end{align}
is monotonic (non-increasing) on $(0,\infty)$ (follows clearly from (\ref{eqn:partial_1_b})). We will use this property moving forward. Recalling the upper and lower bounds of Proposition \ref{prop:motivator_scale}, we re-write them as
\begin{align}\label{eqn:crit_optimized_scale_2}
\frac{c_{\text{lo}}(\beta)}{\beta}  \leq b_{\ddist}^{2}(h,a) \leq \frac{c_{\text{hi}}}{\beta}
\end{align}
using the shorthand notation
\begin{align*}
c_{\text{lo}}(\beta) & \defeq \frac{\lambda}{4}\exx_{\ddist}\rdv{I}(h;a)(\loss(h)-a)^{2}\\
c_{\text{hi}} & \defeq \frac{\lambda}{2}\exx_{\ddist}(\loss(h)-a)^{2}
\end{align*}
and noting that while $c_{\text{hi}}$ is free of $\beta$, $c_{\text{lo}}(\beta)$ depends on $\beta$ through the definition of $\rdv{I}(h;a)$. Fixing $0 < \beta < \lambda$ for now and recalling the form of $\crit_{\ddist}$ in (\ref{eqn:crit_population}), the preceding bounds (\ref{eqn:crit_optimized_scale_2}) and monotonicity of (\ref{eqn:crit_optimized_scale_1}) can be used to obtain a lower bound of the form
\begin{align}\label{eqn:crit_optimized_scale_3}
\min_{b > 0}\crit_{\ddist}(h;a,b) \geq \alpha{a} + \sqrt{\beta c_{\text{lo}}(\beta)} + \lambda\sqrt{c_{\text{hi}}}\exx_{\ddist}\left[\sqrt{\frac{(\loss(h)-a)^{2}}{c_{\text{hi}}}+\frac{1}{\beta}} - \sqrt{\frac{1}{\beta}}\right].
\end{align}
Using the fact (\ref{eqn:beta_makes_scale_grow}) and applying dominated convergence \citep[Thm.~1.6.9]{ash2000a}, in the limit we have
\begin{align*}
\lim_{\beta \to 0_{+}} c_{\text{lo}}(\beta) = \frac{\lambda}{4}\exx_{\ddist}(\loss(h)-a)^{2}.
\end{align*}
Dividing both sides of (\ref{eqn:crit_optimized_scale_3}) by $\sqrt{\beta}$, setting $\alpha = \alpha(\beta)$ as in the proposition statement, and taking the limit as $\beta \to 0_{+}$, we obtain
\begin{align*}
\lim_{\beta \to 0_{+}} \min_{b > 0}\frac{\crit_{\ddist}(h;a,b)}{\sqrt{\beta}} & \geq \widetilde{\alpha}a + \sqrt{\frac{\lambda}{4}\exx_{\ddist}(\loss(h)-a)^{2}} + \frac{\lambda\exx_{\ddist}(\loss(h)-a)^{2}}{2\sqrt{c_{\text{hi}}}}\\
& = \widetilde{\alpha}a + \sqrt{\frac{\lambda}{4}\exx_{\ddist}(\loss(h)-a)^{2}} + \sqrt{\frac{\lambda}{2}\exx_{\ddist}(\loss(h)-a)^{2}}\\
& = \widetilde{\alpha}a + \left(\frac{1}{2}+\frac{1}{\sqrt{2}}\right)\sqrt{\lambda\exx_{\ddist}(\loss(h)-a)^{2}}.
\end{align*}
The first inequality uses the fact that for any $c>0$, we have $\sqrt{cx + x^{2}}-x \to c/2$ as $x \to \infty$, and also uses dominated convergence. The remaining equalities just follow from plugging in the definition of $c_{\text{hi}}$ and cleaning up terms. This proves the desired lower bound.

As for the upper bound of interest, a perfectly analogous argument can be applied. Using Proposition \ref{prop:motivator_scale} again and taking $\beta$ small enough that
\begin{align}\label{eqn:crit_optimized_scale_4}
c_{\text{lo}}(\beta) \geq c_{\text{hi}} / 4
\end{align}
holds (always possible), we can obtain upper bounds of the form
\begin{align}
\nonumber
\min_{b > 0}\crit_{\ddist}(h;a,b) & \leq \alpha{a} + \sqrt{\beta c_{\text{hi}}} + \lambda\sqrt{c_{\text{lo}}(\beta)}\exx_{\ddist}\left[\sqrt{\frac{(\loss(h)-a)^{2}}{c_{\text{lo}}(\beta)}+\frac{1}{\beta}} - \sqrt{\frac{1}{\beta}}\right]\\
\label{eqn:crit_optimized_scale_5}
& \leq \alpha{a} + \sqrt{\beta c_{\text{hi}}} + \lambda\sqrt{c_{\text{hi}}}\exx_{\ddist}\left[\sqrt{\frac{4(\loss(h)-a)^{2}}{c_{\text{hi}}}+\frac{1}{\beta}} - \sqrt{\frac{1}{\beta}}\right]
\end{align}
noting that the latter inequality (\ref{eqn:crit_optimized_scale_5}) follows from using (\ref{eqn:crit_optimized_scale_4}) as well as $c_{\text{lo}}(\beta) \leq c_{\text{hi}}$. As with the lower bound argument in the preceding paragraph, we set $\alpha = \alpha(\beta)$, divide both sides by $\sqrt{\beta}$, and take the limit as $\beta \to 0_{+}$. This results in
\begin{align*}
\lim_{\beta \to 0_{+}} \min_{b > 0}\frac{\crit_{\ddist}(h;a,b)}{\sqrt{\beta}} & \leq \widetilde{\alpha}a + \sqrt{\frac{\lambda}{2}\exx_{\ddist}(\loss(h)-a)^{2}} + \frac{2\lambda\exx_{\ddist}(\loss(h)-a)^{2}}{\sqrt{c_{\text{hi}}}}\\
& = \widetilde{\alpha}a + \sqrt{\frac{\lambda}{2}\exx_{\ddist}(\loss(h)-a)^{2}} + 2\sqrt{2\lambda\exx_{\ddist}(\loss(h)-a)^{2}}\\
& = \widetilde{\alpha}a + \left(2\sqrt{2}+\frac{1}{\sqrt{2}}\right)\sqrt{\lambda\exx_{\ddist}(\loss(h)-a)^{2}}
\end{align*}
which gives us the desired upper bound. The bounds given in the proposition statement are slightly looser, but more readable.
\end{proof}

\bigskip

\begin{proof}[Proof of Proposition \ref{prop:motivator_scale}]
We adapt key elements of the scale control used by \citet[\S{2}]{sun2021arxiv} to our setting. We start by looking at first-order conditions for optimality of $b > 0$. First, note that
\begin{align*}
\frac{\partial}{\partial{b}} \crit_{\ddist}(h;a,b) & = \beta + \lambda\frac{\partial}{\partial{b}} \left( \exx_{\ddist}\sqrt{(\loss(h)-a)^{2}+b^{2}} - b \right)\\
& = \beta + \lambda\exx_{\ddist}\left(\frac{b}{\sqrt{(\loss(h)-a)^{2}+b^{2}}}\right) - \lambda.
\end{align*}
As such, it follows that
\begin{align}\label{eqn:motivator_scale_condition}
\exx_{\ddist}\left(\frac{b}{\sqrt{(\loss(h)-a)^{2}+b^{2}}}\right) = 1 - \beta/\lambda
\end{align}
is equivalent to $\partial_{b}\crit_{\ddist}(h;a,b) = 0$. Obviously, the left-hand side of (\ref{eqn:motivator_scale_condition}) is non-negative for all $b \geq 0$ and bounded above by $1$ for all $b \geq 0$, $a \in \RR$, and $h \in \HH$. Thus (\ref{eqn:motivator_scale_condition}) can only hold for $0 \leq \beta \leq \lambda$.  Using convexity (Lemma \ref{lem:helper_convex_recsq1}) and Jensen's inequality \citep[Thm.~6.3.5]{ash2000a}, we have
\begin{align*}
\exx_{\ddist}\left(\frac{b}{\sqrt{(\loss(h)-a)^{2}+b^{2}}}\right) = \exx_{\ddist}\left(\frac{1}{\sqrt{(\frac{\loss(h)-a}{b})^{2}+1}}\right) \geq \left(\frac{1}{\sqrt{\exx_{\ddist}(\frac{\loss(h)-a}{b})^{2}+1}}\right)
\end{align*}
and thus whenever (\ref{eqn:motivator_scale_condition}) holds, we know that
\begin{align*}
(1-\beta/\lambda)^{2} \geq \frac{1}{\exx_{\ddist}(\frac{\loss(h)-a}{b})^{2}+1}
\end{align*}
must also hold. Re-arranging terms, we see that this implies
\begin{align*}
b^{2} \leq \frac{(1-\beta/\lambda)^{2}\exx_{\ddist}(\loss(h)-a)^{2}}{1-(1-\beta/\lambda)^{2}}.
\end{align*}
For readability, set $\eta \defeq \beta / \lambda$, and note that since
\begin{align*}
\frac{(1-\eta)^{2}}{1-(1-\eta)^{2}} = \frac{(1-\eta)^{2}}{2\eta-\eta^{2}} = \frac{(1-\eta)^{2}}{2\eta(1-\eta/2)} \leq \frac{(1-\eta)^{2}}{2\eta(1-\eta)} \leq \frac{1}{2\eta}
\end{align*}
we can obtain the cleaner (but looser) upper bound
\begin{align*}
b^{2} \leq \frac{\exx_{\ddist}(\loss(h)-a)^{2}}{2\eta} = \frac{\lambda}{2\beta}\exx_{\ddist}(\loss(h)-a)^{2}
\end{align*}
for any choice of $0 < \beta \leq \lambda$ and $a \in \RR$. Since the first-order condition (\ref{eqn:motivator_scale_condition}) is necessary for optimality \citep[Thm.~1.2.1]{nesterov2004ConvOpt}, it follows that
\begin{align}\label{eqn:motivator_scale_upbd}
b_{\ddist}^{2}(h,a) \leq \frac{\lambda}{2\beta} \exx_{\ddist}(\loss(h)-a)^{2}
\end{align}
which is the desired upper bound.

Considering a lower bound next, note first that using the concavity of $x \mapsto \sqrt{x}$ on $\RR_{+}$, another application of Jensen's inequality gives us
\begin{align}\label{eqn:motivator_scale_lowbd_initial}
\exx_{\ddist}\left(\frac{b}{\sqrt{(\loss(h)-a)^{2}+b^{2}}}\right) = \exx_{\ddist}\sqrt{\frac{b^{2}}{(\loss(h)-a)^{2}+b^{2}}} \leq \sqrt{\exx_{\ddist}\left(\frac{1}{(\frac{\loss(h)-a}{b})^{2}+1}\right)}.
\end{align}
Using the inequality $1/(x+1) \leq 1 - x/2$ for all $0 \leq x \leq 1$ ((\ref{eqn:helper_ineq_rec1x}) in Lemma \ref{lem:helper_ineq}), this suggests a natural event to use as a condition. More precisely, writing $\rdv{E} \defeq \indic\left\{\abs{\loss(h)-a} \leq b\right\}$ for readability, note that we have
\begin{align*}
\frac{1}{(\frac{\loss(h)-a}{b})^{2}+1} & = \frac{1-\rdv{E}}{(\frac{\loss(h)-a}{b})^{2}+1} + \frac{\rdv{E}}{(\frac{\loss(h)-a}{b})^{2}+1}\\
& \leq \frac{1-\rdv{E}}{(\frac{\loss(h)-a}{b})^{2}+1} + \rdv{E}\left(1-\frac{1}{2}\left(\frac{\loss(h)-a}{b}\right)^{2}\right)\\
& = \underbrace{\left(\frac{1-\rdv{E}}{(\frac{\loss(h)-a}{b})^{2}+1} + \rdv{E}\right)}_{\leq 1} - \frac{\rdv{E}}{2}\left(\frac{\loss(h)-a}{b}\right)^{2}.
\end{align*}
Taking expectation and utilizing (\ref{eqn:motivator_scale_lowbd_initial}), whenever (\ref{eqn:motivator_scale_condition}) holds, we have
\begin{align}\label{eqn:motivator_scale_lowbd_useful}
1-\beta/\lambda = \exx_{\ddist}\left(\frac{b}{\sqrt{(\loss(h)-a)^{2}+b^{2}}}\right) \leq \sqrt{1 - \exx_{\ddist}\frac{\rdv{E}}{2}\left(\frac{\loss(h)-a}{b}\right)^{2}}.
\end{align}
With this established, note that via helper inequality (\ref{eqn:helper_ineq_sun}), for any $\beta \leq \lambda$ we have
\begin{align*}
(1-\beta/\lambda)^{2} \geq 1 - 2\beta/\lambda
\end{align*}
and thus in light of (\ref{eqn:motivator_scale_lowbd_useful}), we may conclude that
\begin{align*}
1 - 2\beta/\lambda \leq 1 - \exx_{\ddist}\frac{\rdv{E}}{2}\left(\frac{\loss(h)-a}{b}\right)^{2}
\end{align*}
which implies
\begin{align*}
\frac{\lambda}{4\beta}\exx_{\ddist}\rdv{E}(h;a,b)(\loss(h)-a)^{2} \leq b^{2}
\end{align*}
noting that we have written $\rdv{E}(h;a,b)$ to emphasize the dependence on $h$, $a$, and $b$. Once again since the first-order condition (\ref{eqn:motivator_scale_condition}) is necessary for optimality, we may conclude that
\begin{align}\label{eqn:motivator_scale_lowbd}
\frac{\lambda}{4\beta}\exx_{\ddist}\rdv{E}(h;a,b_{\ddist}(h,a))(\loss(h)-a)^{2} \leq b_{\ddist}^{2}(h,a)
\end{align}
which is the remaining desired inequality.
\end{proof}

\bigskip

\begin{proof}[Proof of the limit (\ref{eqn:beta_makes_scale_grow})]
Recall from the proof of Proposition \ref{prop:motivator_scale} the first-order optimality condition (\ref{eqn:motivator_scale_condition}), which is satisfied by any solution $b_{\ddist}(h,a)$ given by (\ref{eqn:optimal_scale}), i.e., we have
\begin{align}\label{eqn:beta_makes_scale_grow_1}
\exx_{\ddist}\left(\frac{b_{\ddist}(h,a)}{\sqrt{(\loss(h)-a)^{2}+(b_{\ddist}(h,a))^{2}}}\right) = 1 - \beta/\lambda
\end{align}
for any $0 < \beta \leq \lambda$. Defining $g(\beta) \defeq 1-\beta/\lambda$ and taking any $0 < \beta_{2} < \beta_{1} \leq \lambda$, clearly we have $g(\beta_{1}) < g(\beta_{2})$ and thus using the equality (\ref{eqn:beta_makes_scale_grow_1}), we must have that $b_{\ddist}(h,a;\beta_{2}) \geq b_{\ddist}(h,a;\beta_{1})$, otherwise it would result in a contradiction of (\ref{eqn:beta_makes_scale_grow_1}). Using this monotonicity, clearly
\begin{align*}
\rdv{E}(h;a,b_{\ddist}(h,a;\beta_{1})) \leq \rdv{E}(h;a,b_{\ddist}(h,a;\beta_{2}))
\end{align*}
and thus
\begin{align*}
\exx_{\ddist}\rdv{E}(h;a,b_{\ddist}(h,a;\beta_{1}))(\loss(h)-a)^{2} \leq \rdv{E}(h;a,b_{\ddist}(h,a;\beta_{2}))(\loss(h)-a)^{2}.
\end{align*}
Applying this to the lower bound in Proposition \ref{prop:motivator_scale}, we have
\begin{align*}
\liminf_{\beta \to 0_{+}} b_{\ddist}^{2}(h,a) \geq \lim_{\beta \to 0_{+}} \frac{\lambda}{4\beta}\exx_{\ddist}\rdv{I}(h;a)(\loss(h)-a)^{2} = \infty
\end{align*}
as desired.
\end{proof}

\bigskip

\begin{proof}[Proof of the limit (\ref{eqn:robust_deviations_vanish})]
Note that we can easily bound the random variable of interest as
\begin{align}\label{eqn:robust_deviations_vanish_1}
0 \leq b\rho\left(\frac{\loss(h)-a}{b}\right) = \sqrt{(\loss(h)-a)^{2}+b^{2}} - b \leq \abs{\loss(h)-a}
\end{align}
for any choice of $0 < b < \infty$. Some straightforward calculus shows that
\begin{align*}
\lim\limits_{b \to \infty} b\rho\left(\frac{\loss(h)-a}{b}\right) = 0
\end{align*}
in a pointwise sense. Since the upper bound in (\ref{eqn:robust_deviations_vanish_1}) is $\ddist$-integrable by assumption, a simple application of dominated convergence \citep[Thm.~1.6.9]{ash2000a} yields
\begin{align*}
\lim\limits_{b \to \infty} b \exx_{\ddist}\rho\left(\frac{\loss(h)-a}{b}\right) = \exx_{\ddist} \left[ \lim\limits_{b \to \infty} b\rho\left(\frac{\loss(h)-a}{b}\right) \right] = 0
\end{align*}
as desired.
\end{proof}

\bigskip

\begin{proof}[Proof of Proposition \ref{prop:location_concentration}]
From condition (\ref{eqn:optimal_empirical_location}), since any solution must also be a stationary point \citep[Thm.~1.2.1]{nesterov2004ConvOpt}, we know that $\rdv{A}_{n} \defeq \rdv{A}_{n}(h,b)$ must satisfy the first-order condition 
\begin{align*}
\frac{\lambda}{n} \sum_{i=1}^{n} \rho^{\prime}\left(\frac{\loss_{i}(h)-\rdv{A}_{n}}{b}\right) = \alpha
\end{align*}
which is equivalent to 
\begin{align}\label{eqn:location_concentration_0}
\frac{b}{n} \sum_{i=1}^{n} \rho^{\prime}\left(\frac{\loss_{i}(h)-\rdv{A}_{n}}{b}\right) = \frac{\alpha}{\lambda}b.
\end{align}
Next we make use of the argument developed by \citet[\S{2}]{catoni2012a}. First note that fixing any $a \in \RR$ and $b > 0$, we have
\begin{align}
\nonumber
\exx\exp\left(\sum_{i=1}^{n}\rho^{\prime}\left(\frac{\loss_{i}(h)-a}{b}\right)\right) & = \exx\left[\prod_{i=1}^{n}\exp\left(\rho^{\prime}\left(\frac{\loss_{i}(h)-a}{b}\right)\right)\right]\\
\nonumber
& = \prod_{i=1}^{n}\exx_{i}\exp\left(\rho^{\prime}\left(\frac{\loss_{i}(h)-a}{b}\right)\right)\\
\nonumber
& \leq \prod_{i=1}^{n} \exx_{i}\left(1 + \frac{\loss_{i}(h)-a}{b} + \frac{\gamma}{b^{2}}(\loss_{i}(h)-a)^{2}\right)\\
\nonumber
& = \left(1 + \frac{\exx_{\ddist}\loss(h)-a}{b} + \frac{\gamma}{b^{2}}\exx_{\ddist}(\loss(h)-a)^{2}\right)^{n}\\
\label{eqn:location_concentration_1}
& \leq \exp\left( \frac{n}{b}\left(\exx_{\ddist}\loss(h)-a\right) + \frac{n\gamma}{b^{2}}\exx_{\ddist}(\loss(h)-a)^{2} \right).
\end{align}
The second equality above follows from the independence of the training data, and the first inequality uses the upper bound in (\ref{eqn:motivation_catoni_condition}), which is satisfied by $\rho$ given in (\ref{eqn:sun_huber_fn}) with $\gamma = 1$, though we leave $\gamma$ as is to illustrate how more general results are obtained. The third equality just uses the fact that the training data is an IID sample from $\ddist$, and the final inequality culminating in (\ref{eqn:location_concentration_1}) just uses the bound $1+x \leq \exp(x)$. Using Markov's inequality and taking $0 < \delta < 1$, it is straightforward to show that (\ref{eqn:location_concentration_1}) implies a $1-\delta$ event (over the draw of $\rdv{Z}_{1},\ldots,\rdv{Z}_{n}$) in which we have
\begin{align*}
\sum_{i=1}^{n}\rho^{\prime}\left(\frac{\loss_{i}(h)-a}{b}\right) \leq \frac{n}{b}\left(\exx_{\ddist}\loss(h)-a\right) + \frac{n\gamma}{b^{2}}\exx_{\ddist}(\loss(h)-a)^{2} + \log(1/\delta).
\end{align*}
Multiplying both sides by $b/n$, on the same ``good'' event, we have
\begin{align}
\nonumber
\frac{b}{n}\sum_{i=1}^{n}\rho^{\prime}\left(\frac{\loss_{i}(h)-a}{b}\right) & \leq \exx_{\ddist}\loss(h)-a + \frac{\gamma}{b}\exx_{\ddist}(\loss(h)-a)^{2} + \frac{b\log(1/\delta)}{n}\\
\label{eqn:location_concentration_2}
& = \exx_{\ddist}\loss(h)-a + \frac{\gamma}{b}\left(\vaa_{\ddist}\loss(h) + (\exx_{\ddist}\loss(h)-a)^{2}\right) + \frac{b\log(1/\delta)}{n}
\end{align}
where (\ref{eqn:location_concentration_2}) follows from expanding the quadratic term and doing some algebra. With the equality (\ref{eqn:location_concentration_0}) in mind, subtracting a constant from both sides of (\ref{eqn:location_concentration_2}), note that we equivalently have
\begin{align}\label{eqn:location_concentration_3}
\frac{b}{n}\sum_{i=1}^{n}\rho^{\prime}\left(\frac{\loss_{i}(h)-a}{b}\right) - \frac{\alpha}{\lambda}b \leq p(a)
\end{align}
where we have defined
\begin{align}\label{eqn:location_concentration_4}
p(a) \defeq \exx_{\ddist}\loss(h)-a + \frac{\gamma}{b}\left(\vaa_{\ddist}\loss(h) + (\exx_{\ddist}\loss(h)-a)^{2}\right) + \frac{b\log(1/\delta)}{n} - \frac{\alpha}{\lambda}b
\end{align}
for readability. Note that $p(\cdot)$ in (\ref{eqn:location_concentration_4}) is a polynomial of degree 2, and can be written as
\begin{align}
p(a) = ua^{2} + va + w
\end{align}
with coefficients defined as
\begin{align*}
u & \defeq \frac{\gamma}{b}\\
v & \defeq (-1)\left(1 + \frac{2\gamma\exx_{\ddist}\loss(h)}{b}\right)\\
w & \defeq \exx_{\ddist}\loss(h) + \frac{\gamma}{b} \exx_{\ddist}\abs{\loss(h)}^{2} + \frac{b\log(1/\delta)}{n} - \frac{\alpha}{\lambda}b.
\end{align*}
This polynomial has real roots whenever $v^{2} - 4uw \geq 0$, and some algebra shows that this is equivalent to
\begin{align}\label{eqn:location_concentration_polycondition_up}
0 \leq D \leq 1, \text{ where } D \defeq 4\left( \left(\frac{\gamma}{b}\right)^{2}\vaa_{\ddist}\loss(h) + \frac{\gamma\log(1/\delta)}{n} - \frac{\gamma\alpha}{\lambda} \right).
\end{align}
Assuming this holds, denoting by $a_{+}$ the smallest root of $p(\cdot)$, i.e., the smallest of satisfying $p(a_{+})=0$, the critical fact of interest to us is that $\rdv{A}_{n} \leq a_{+}$ on the good event of (\ref{eqn:location_concentration_3}). This is valid due to two facts: first, the left-hand side of (\ref{eqn:location_concentration_3}) is a decreasing function of $a$; second, due to (\ref{eqn:location_concentration_0}), we know that $\rdv{A}_{n}$ is a root of the left-hand side of (\ref{eqn:location_concentration_3}). With this key fact in hand, using the quadratic formula we have
\begin{align*}
\rdv{A}_{n} & \leq a_{+}\\
& = \exx_{\ddist}\loss(h) + \frac{b}{2\gamma}\left(1 - \sqrt{1-D}\right)\\
& = \exx_{\ddist}\loss(h) + \frac{b}{2\gamma}\frac{\left(1 - \sqrt{1-D}\right)\left(1 + \sqrt{1-D}\right)}{\left(1 + \sqrt{1-D}\right)}\\
& = \exx_{\ddist}\loss(h) + \frac{b}{2\gamma}\frac{D}{\left(1 + \sqrt{1-D}\right)}\\
& \leq \exx_{\ddist}\loss(h) + \frac{b}{2\gamma}D.
\end{align*}
Taking the two ends of this inequality chain together and expanding $D$, we have
\begin{align}\label{eqn:location_concentration_upbd}
\rdv{A}_{n} \leq \exx_{\ddist}\loss(h) - 2(\alpha/\lambda)b + 2\left( \frac{\gamma}{b}\vaa_{\ddist}\loss(h) + \frac{b\log(1/\delta)}{n} \right)
\end{align}
with probability no less than $1-\delta$, assuming that $n$, $b$, and $\alpha$ are such that $0 \leq D \leq 1$ holds. This gives us the desired upper bound.

To obtain a lower bound, a perfectly analogous argument can be applied. First, using the \emph{lower} bound in (\ref{eqn:motivation_catoni_condition}) and the fact that $\rho^{\prime}(-x) = -\rho^{\prime}(x)$, we know that
\begin{align}\label{eqn:location_concentration_5}
\rho^{\prime}\left(\frac{a - \loss_{i}(h)}{b}\right) \leq \log\left(1 + \frac{a - \loss_{i}(h)}{b} + \frac{\gamma}{b^{2}}(a - \loss_{i}(h))^{2}\right)
\end{align}
for any $a \in \RR$, $b > 0$, and $i \in [n]$. Plugging this inequality (\ref{eqn:location_concentration_5}) into an argument analogous to the chain of inequalities that led to (\ref{eqn:location_concentration_2}) earlier, it is clear that again on an event of probability no less than $1-\delta$, we have
\begin{align}\label{eqn:location_concentration_6}
\frac{b}{n}\sum_{i=1}^{n}\rho^{\prime}\left(\frac{a-\loss_{i}(h)}{b}\right) \leq a-\exx_{\ddist}\loss(h) + \frac{\gamma}{b}\left(\vaa_{\ddist}\loss(h) + (\exx_{\ddist}\loss(h)-a)^{2}\right) + \frac{b\log(1/\delta)}{n}.
\end{align}
Once again the upper bound we can bound this using a polynomial of degree 2, namely
\begin{align}\label{eqn:location_concentration_7}
\frac{b}{n}\sum_{i=1}^{n}\rho^{\prime}\left(\frac{a-\loss_{i}(h)}{b}\right) + \frac{\alpha}{\lambda}b \leq q(a)
\end{align}
where we have defined
\begin{align}
q(a) \defeq a-\exx_{\ddist}\loss(h) + \frac{\gamma}{b}\left(\vaa_{\ddist}\loss(h) + (\exx_{\ddist}\loss(h)-a)^{2}\right) + \frac{b\log(1/\delta)}{n} + \frac{\alpha}{\lambda}b.
\end{align}
Now, since $\rdv{A}_{n}$ is a root of the left-hand side of (\ref{eqn:location_concentration_7}) viewed as a function of $a$, and this function is monotonically increasing, it is evident that denoting the largest root of $q(\cdot)$ (when it exists) by $a_{-}$, we have $\rdv{A}_{n} \geq a_{-}$, a lower bound in contrast to the $\rdv{A}_{n} \leq a_{+}$ upper bound used earlier. For completeness, we write this polynomial as
\begin{align}
q(a) = u^{\prime}a^{2} + v^{\prime}a + w^{\prime}
\end{align}
with coefficients
\begin{align*}
u^{\prime} & \defeq \frac{\gamma}{b}\\
v^{\prime} & \defeq \left(1 - \frac{2\gamma\exx_{\ddist}\loss(h)}{b}\right)\\
w^{\prime} & \defeq (-1)\exx_{\ddist}\loss(h) + \frac{\gamma}{b} \exx_{\ddist}\abs{\loss(h)}^{2} + \frac{b\log(1/\delta)}{n} + \frac{\alpha}{\lambda}b.
\end{align*}
We have two real roots whenever
\begin{align}\label{eqn:location_concentration_polycondition_lo}
1 \geq D^{\prime} \defeq 4\left( \left(\frac{\gamma}{b}\right)^{2}\vaa_{\ddist}\loss(h) + \frac{\gamma\log(1/\delta)}{n} + \frac{\gamma\alpha}{\lambda} \right)
\end{align}
holds, and thus we obtain a high probability lower bound on $\rdv{A}_{n}$ as follows:
\begin{align*}
\rdv{A}_{n} & \geq a_{-}\\
& = \exx_{\ddist}\loss(h) - \frac{b}{2\gamma}\left(1-\sqrt{1-D^{\prime}}\right)\\
& \geq \exx_{\ddist}\loss(h) - \frac{b}{2\gamma}D^{\prime}.
\end{align*}
Expanding $D^{\prime}$ gives us the lower bound
\begin{align}\label{eqn:location_concentration_lobd}
\rdv{A}_{n} \geq \exx_{\ddist}\loss(h) - 2(\alpha/\lambda)b - 2\left( \frac{\gamma}{b}\vaa_{\ddist}\loss(h) + \frac{b\log(1/\delta)}{n} \right)
\end{align}
with probability no less than $1-\delta$, as desired.

Let us conclude this proof by organizing the technical assumptions. First of all, for the two quadratics used in the preceding bounds, we require both (\ref{eqn:location_concentration_polycondition_up}) and (\ref{eqn:location_concentration_polycondition_lo}) to hold. It is straightforward to verify that having these conditions both hold is equivalent to the following:
\begin{align}\label{eqn:location_concentration_polycondition_joint}
\frac{4\gamma\alpha}{\lambda} \leq 4\left( \left(\frac{\gamma}{b}\right)^{2}\vaa_{\ddist}\loss(h) + \frac{\gamma\log(1/\delta)}{n} \right) \leq 1 - \frac{4\gamma\alpha}{\lambda}.
\end{align}
As such, whenever $\alpha$, $\delta$, and $b$ are such that (\ref{eqn:location_concentration_polycondition_joint}) holds, using a union bound, it follows that with probability no less than $1-2\delta$, we have a bound on
\begin{align*}
\abs{\rdv{A}_{n} - (\exx_{\ddist}\loss(h)-2(\alpha/\lambda)b)} \leq 2\left( \frac{\gamma}{b}\vaa_{\ddist}\loss(h) + \frac{b\log(1/\delta)}{n} \right)
\end{align*}
as desired. The proposition statement takes a cleaner form since we have $\gamma=1$.
\end{proof}

\bigskip

\begin{proof}[Proof of Proposition \ref{prop:nonsmooth}]
The lack of convexity follows from the fact that the composition of two convex functions need not be convex when the outermost function is \emph{non-monotonic} (see for example \citet[Ch.~3]{boyd2004ConvOpt}), and the lack of smoothness follows \textit{a fortiori} from Lemma \ref{lem:partial_objective}.
\end{proof}

\end{document}